\documentclass[twocolumn]{IEEEtran}
\usepackage[utf8]{inputenc}
\usepackage{graphicx}
\usepackage{amsmath}
\usepackage{amssymb}
\usepackage{amsthm}
\usepackage{cite}
\usepackage{caption,subcaption}
\usepackage{url}
\usepackage{xcolor}
\usepackage{mathtools,cuted,flushend}

\newtheorem{theorem}{Theorem}
\newtheorem{assumption}{Assumption}

\newtheorem{corollary}{Corollary}
\newtheorem{definition}{Definition}
\newtheorem{remark}{Remark}

\begin{document}
\title{Multi-AAV Cooperative Path Planning using Nonlinear Model Predictive Control with Localization Constraints}
	\author{Amith Manoharan, Rajnikant Sharma and P.B. Sujit %
\thanks{Amith Manoharan is a Graduate Student at IIIT Delhi, New Delhi -- 110020, India. email: amithm@iiitd.ac.in}
\thanks{Rajnikant Sharma is Assistant Professor at University of Cincinnati, Cincinnati, OH 45221. email: sharmar7@ucmail.uc.edu}
\thanks{P.B. Sujit is Associate Professor at IISER Bhopal, Bhopal -- 462066, India. email: sujit@iiserb.ac.in}}
	\maketitle
	
	\begin{abstract}
In this paper, we solve a joint  cooperative localization and path planning problem for a group of Autonomous Aerial Vehicles (AAVs) in GPS-denied areas using nonlinear model predictive control (NMPC). 
A moving horizon estimator (MHE)  is used to estimate the vehicle states with the help of relative bearing information to known landmarks and other vehicles. The goal of the NMPC is to devise optimal paths for each vehicle between a given source and destination while maintaining desired localization accuracy.  Estimating localization covariance in the NMPC is computationally intensive, hence we develop an approximate analytical closed form expression based on the relationship between covariance and path lengths to landmarks. Using this expression while computing NMPC commands reduces the computational complexity significantly.  We present numerical simulations to validate the proposed approach for different numbers of vehicles and landmark configurations. We also compare the results with EKF-based estimation to show the superiority of the proposed closed form approach.

		
	\end{abstract}
	
	Note to Practitioners:
\begin{abstract}
The use of AAVs in urban regions is expected to increase with several logistic and healthcare applications. These AAVs depend on GPS for localization, however, in urban regions, due to interference of building structures obtaining accurate localization information is difficult and at times may not be available. This issue hampers the AAV operations. In this paper, we develop a mechanism by which the AAVs use landmarks in the region and also the availability of other vehicles in the regions to localize and achieve the mission. For localization we use MHE and to generate the paths, we use a NMPC method. In order to improve computational speed, we  developed an approximate closed form analytical covariance method which is used in the NMPC for covariance calculation. We showed through several simulations that the proposed joint path planning with  localization constraints could determine optimal paths to the vehicles while satisfying the localization accuracy. This approach can be used by the UAV industries as an alternative mechanism for localization while determining the paths for the vehicles. The simulation results are promising but further work is required to experimentally demonstrate the proof-of-concept.
\end{abstract}

\begin{IEEEkeywords}
Path planning, Cooperative localization, Nonlinear model predictive control, UAVs
\end{IEEEkeywords}
	
	\IEEEpeerreviewmaketitle
	
	\section{Introduction}\label{sec:intro}
\IEEEPARstart{U}{rban}  air mobility (UAM) is expected to have highly automated, cooperative, passenger, and cargo-carrying aerial vehicles in urban areas \cite{faa}, and the use of autonomous aerial vehicles (AAVs) for various activities are expected to rise substantially in the near future \cite{sesar}. Cargo delivery drones operate in urban canyons with high-rise buildings and other obstructions, which calls for significant localization accuracy. However, operating in such environments pose an additional challenge in localization since Global navigation satellite systems (GNSS) are unreliable in such scenarios. A solution to this problem is to use alternate localization schemes such as relative localization \cite{betke1997mobile,thrun1998finding}, vision-based methods \cite{lategahn2014vision,kim2005vision}, and ultra-wide-band (UWB) localization schemes \cite{krishnan2007uwb,zhang2018linear}. As the urban airspace is expected to contain a large number of AAVs, relative localization between vehicles can also be used in addition to known landmarks localization. 

Cooperative path planning with localization constraints involves the following components: (i) localization -- vehicles estimate their position by using relative measurements obtained with respect to other vehicles or landmarks, and (ii) cooperative path planning -- determine  optimal paths for each vehicle from a given source to destination. To achieve (ii), the agents must cooperate with each other to  generate motion commands that  improves the localization accuracy of the entire group while reducing the path length to reach their respected destinations. Several works have studied (i) and (ii)  separately. For instance, \cite{Kurazume1998,Roumeliotis2000,Spletzer2001,Mourikis2006,Nerurkar2009,wan2014cooperative,minaeian2016vision,frohle2018cooperative,pierre2018range,liu2018multi,guo2019ultra,zhu2019cooperative} address the problem of cooperative localization, while  \cite{zheng2005evolutionary,Agha-Mohammadi2014,Mathew2015,mac2016heuristic} focus on cooperative path planning problem. However, the collection of works that jointly address cooperative path planning with the localization constraints is limited. Below,  we will review some of the works in this domain.

\begin{figure}
	\centering 
	\begin{subfigure}{4cm}
		\includegraphics[width=\linewidth]{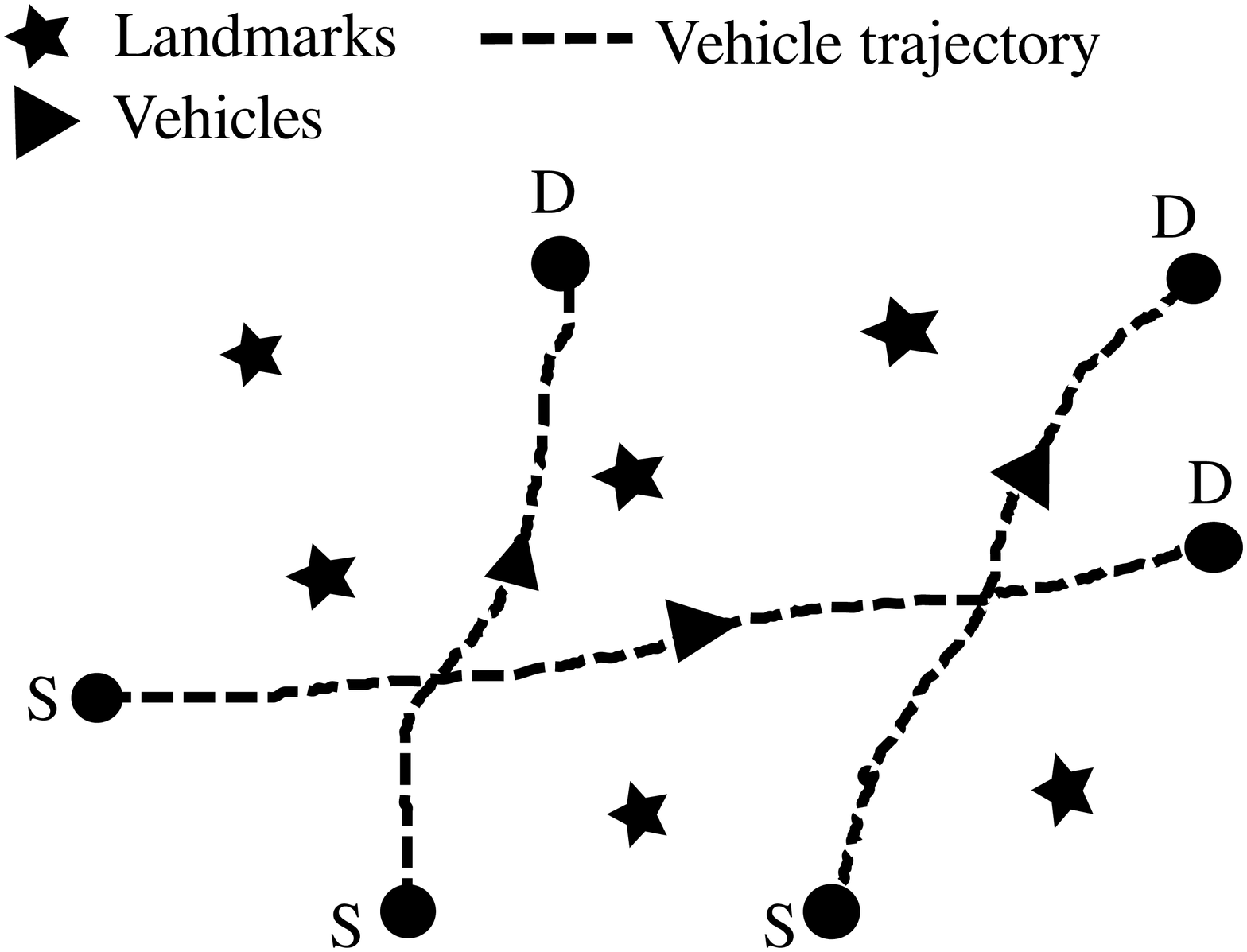}
		\caption{}
		\label{fig:scenario}
	\end{subfigure}
	\begin{subfigure}{4cm}
		\includegraphics[width=\linewidth]{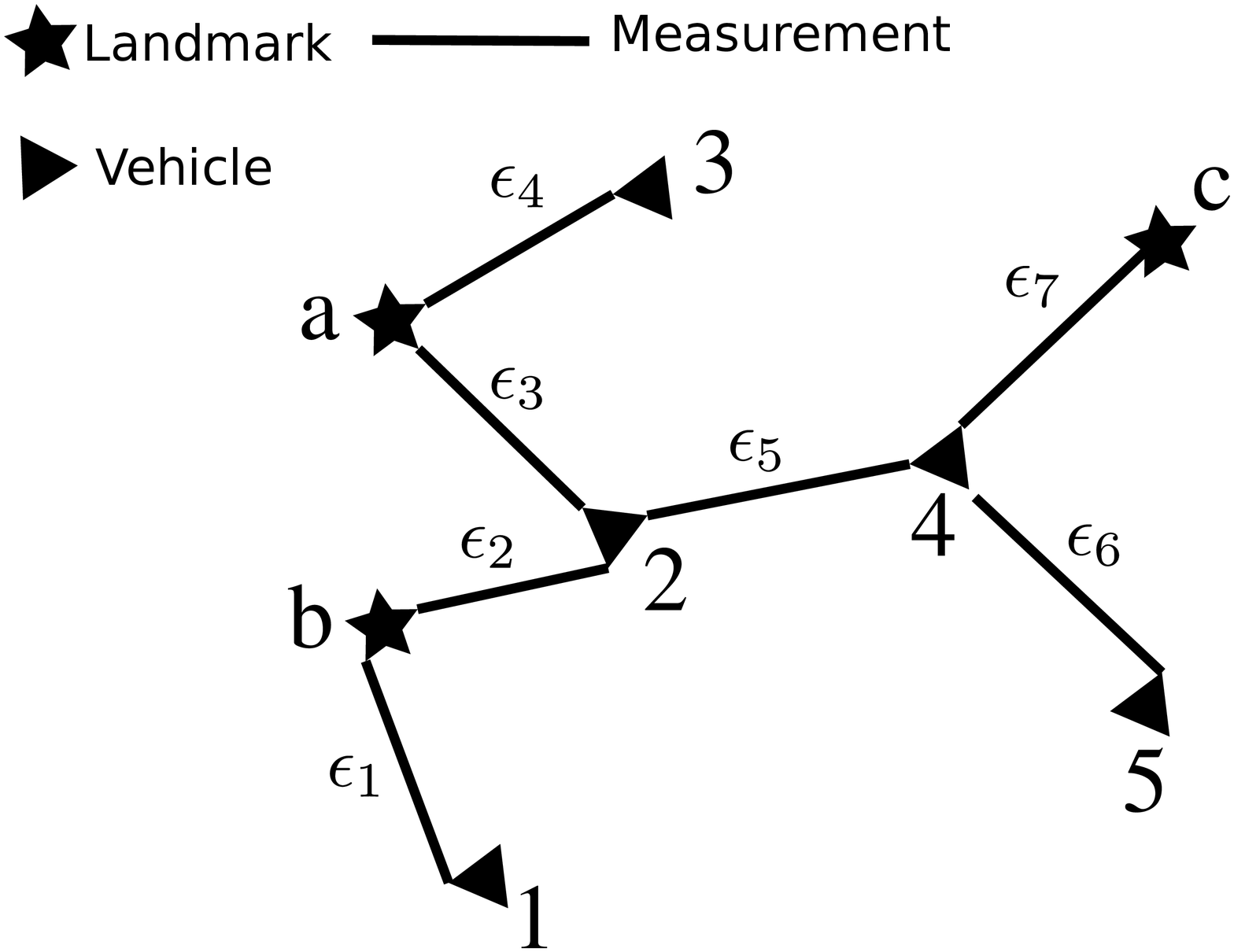}		
		\caption{}
		\label{fig:rpmg}
	\end{subfigure}
	\caption{(a) Path planning scenario. (b) Relative position measurement graph with vehicles and landmarks as nodes and measurements as edges. }
	\label{fig:path_graph}
\end{figure}  

Bopardikar et al.~\cite{bopardikar2015multiobjective} presented a graph-based probabilistic roadmap approach to tackle the path planning problem subject to localization constraints. The generated path is a discretized path while we are addressing a continuous path problem. A time-optimal path planner satisfying the covariance bounds was given in \cite{singh2016landmarks} using a swarm optimization technique coupled with a rabbit-carrot based path follower. An approach to optimally place landmarks to satisfy localization constraints was proposed in \cite{sundar2019path}. The algorithm computes an optimal path for the vehicle and the locations where the landmarks should be placed. A localizability constrained path planning method for autonomous vehicles which takes into account the laser range finder (LRF) sensor model of the vehicle is proposed in \cite{irani2018localizability} to maintain a satisfactory level of localizability throughout the path. Kassas et al. \cite{kassas2021uav} present a multi-objective motion planning algorithm in which the vehicle tries to balance the objectives of navigating to the waypoint and reducing its position estimate uncertainty. All the above works are limited to one vehicle only. 

Urban air mobility calls for improved localization accuracy due to its innate nature involving close structures, narrow pathways, and a large number of vehicles. Moving horizon estimation (MHE) has been suggested as an alternate to EKF for increasing the accuracy of nonlinear estimation problems by \cite{rawlings2006particle,wang2014optimization,mehrez2017optimization}. Erunsal et al. \cite{erunsal2019decentralized} proposed an approach combining NMPC and pose-graph-MHE for 3D formation control of micro aerial vehicles with relative sensing capability. In \cite{liu2020decentralized}, a decentralized MHE technique is proposed for networked navigation with packet dropouts.    

In this paper, we extend the  work in ~\cite{manoharan2019nonlinear} and propose a joint cooperative localization and path planning framework with MHE for estimating the vehicle position, and NMPC  framework for cooperative path planning, and a closed formulation for covariance calculation to  predict the uncertainty. This framework uses a nonlinear vehicle model in both the controller and the estimator, which mitigates the linearization errors. The analytical expression used for the covariance calculation speeds up the computations and is derived by exploiting the relationship between the vehicle-landmark path lengths to the localization uncertainty. The proposed approach provides a flexibility, where each vehicle can decide to maintain, lose, or gain connections depending on their covariance estimates.In most of the literature for multi-agent systems, the studies are formulated either as a control problem or an estimation problem \cite{Mourikis2006,Nerurkar2009,Roumeliotis2000,wan2014cooperative,wang2014optimization}. We propose a method that combines both and looks at the multi-agent problem in a holistic sense.

The major contributions of this paper as follows:
\begin{itemize}
	\item A complete framework for control and estimation of multi-vehicle cooperative path planning problem with localization constraints using NMPC and MHE.
	\item An analysis on the relation of path lengths between vehicles and landmarks on the estimation covariance
\item An approximate closed form analytical expression to compute localization error covariance
\item Evaluation of the proposed joint cooperative path planning with localization constraints framework through numerical simulations and comparison with EKF-based estimation framework
\end{itemize}

The rest of the paper is organized as follows. The problem formulation is given in Section \ref{sec:problem}. Moving horizon estimation (MHE) is explained in Section \ref{sec:MHE}. The derivation of the analytic expression for covariance is given in Section \ref{sec:analysis}. The NMPC formulation is given in Section \ref{sec:NMPC}. Simulation results are presented in Section \ref{sec:results}, and the conclusions are given in Section \ref{sec:conclusions}. 


\begin{figure*}
	\centering 
	\begin{subfigure}{7cm}
		\includegraphics[width=7cm]{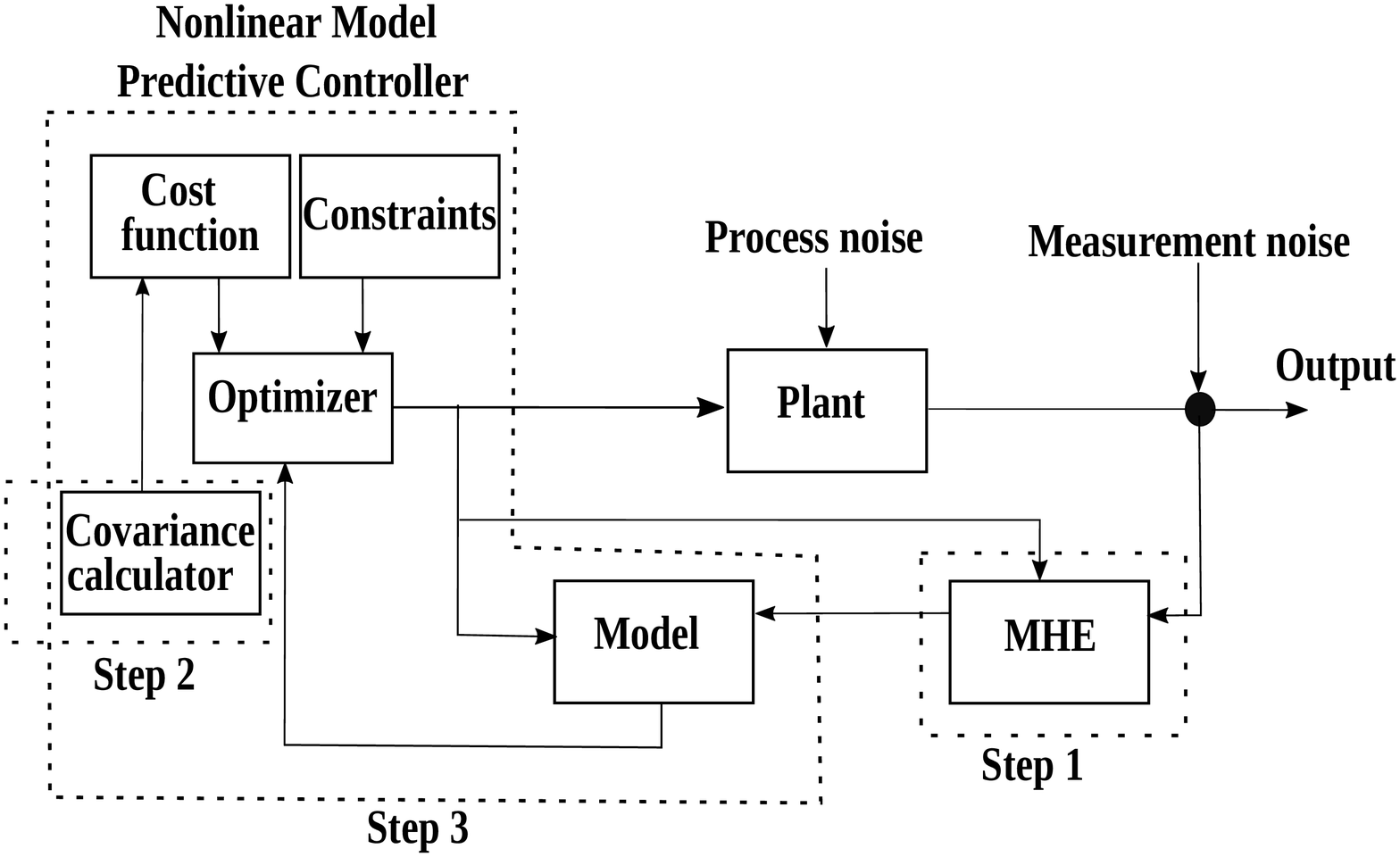}
	\end{subfigure}
	\begin{subfigure}{7cm}
		\includegraphics[width=7cm]{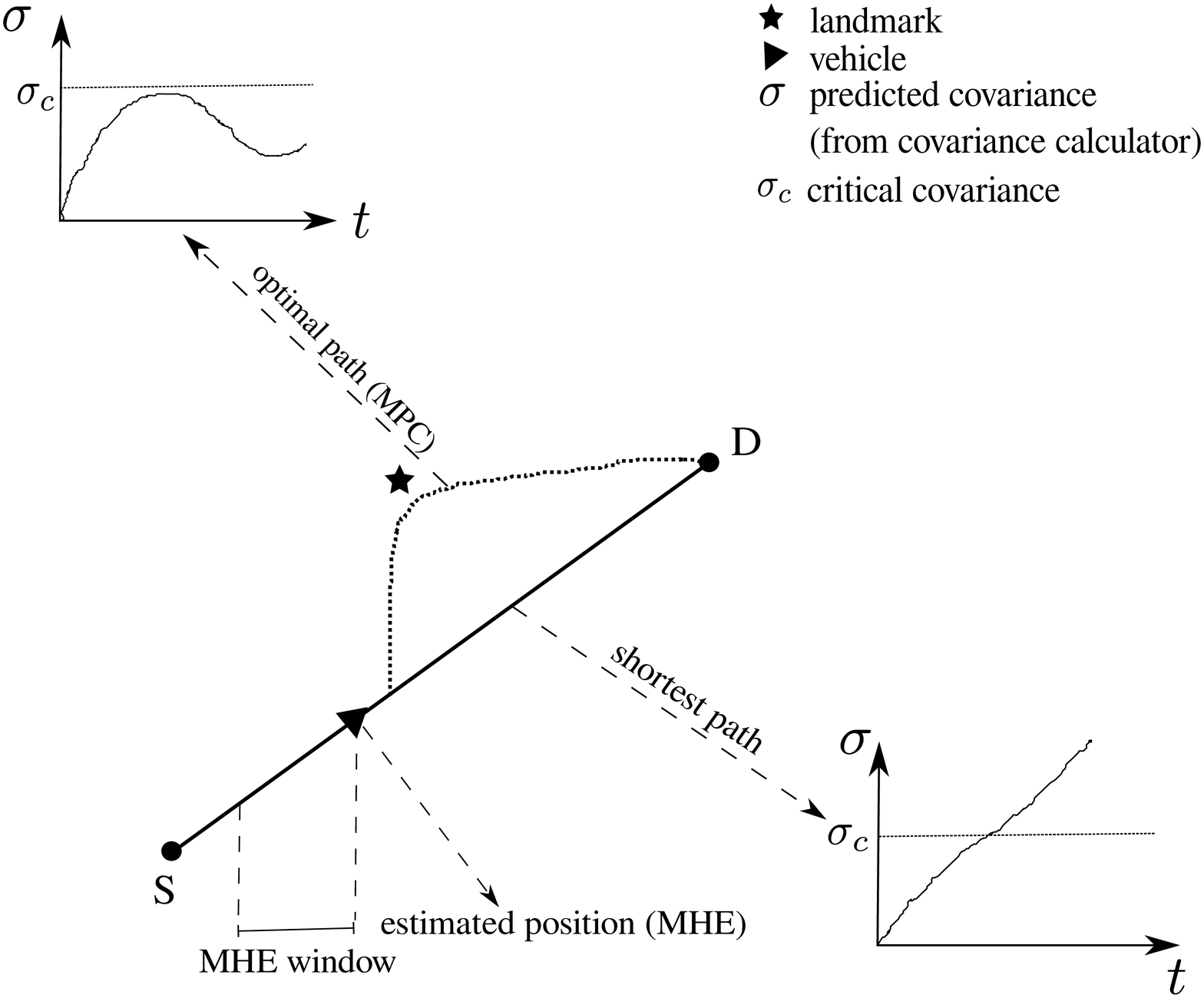}		
	\end{subfigure}\hfil 
	\caption{Block diagram and a graphical representation of the proposed NMPC-MHE control scheme. }
	\label{fig:block}
\end{figure*}  

\section{Problem formulation}\label{sec:problem}	
We consider a scenario where a group of AAVs need to navigate from their source location ($S$) to destination ($D$), as shown in Fig.~\ref{fig:scenario}. These vehicles need to transit in a GPS-denied area and we assume that any kind of GNSS are not available. In such scenarios, known landmarks or other vehicles in the area can be used for relative localization using range or bearing measurements. This structure involving landmarks and vehicles can be modeled as a dynamic relative position measurement graph (RPMG) \cite{sharma2011graph} with vehicles and landmarks as nodes and connections/measurements as edges.
\begin{definition}
	A relative position measurement graph (RPMG) for $n_v(t)$ vehicles with $n_l(t)$ landmarks is a graph $G_{n_v}^{n_l}(t)~\triangleq~\{\mathcal{V}_{n_v}^{n_l}(t),\mathcal{E}_{n_v}^{n_l}(t) \}$, where $\mathcal{V}_{n_v}^{n_l}$ is the node set consisting of $n_v(t)$ vehicle nodes and $n_l(t)$ landmark nodes (which makes a total of $n_v(t)+n_l(t)$ nodes), and $\mathcal{E}_{n_v}^{n_l}(t)$ is the edge set representing available relative measurements. The number of edges is denoted by $n_e(t)= |\mathcal{E}_{n_v}^{n_l}(t)|$.   
\end{definition}
\begin{definition}
	A path from a vehicle node to a landmark node is a finite sequence of edges which joins a sequence of distinct vertices between them. Let $\mathcal{G} \triangleq \{\mathcal{V},\mathcal{E},\phi\} $ be a graph. A path $\phi$ from vertex $i$ to vertex $j$ is a sequence of edges $\{\epsilon_1,\epsilon_2, \ldots, \epsilon_{n-1} \}$ for which there is a sequence of distinct vertices $\{\nu_1,\nu_2, \ldots, \nu_n \}$ such that $\phi \triangleq \{ \nu_1,\nu_n \}$ where $\nu_1 = i$ and $\nu_n = j$.
\end{definition}
An example RPMG ($G_{5}^{3}$ with $n_e=7$) is shown in Fig.~\ref{fig:rpmg}. A path from the vehicle $5$ to the landmark $a$ is represented by the edge set $\phi \triangleq \{\epsilon_6,\epsilon_5,\epsilon_3 \}$ which can also be represented using the vertices as $5-4-2-a$. 

Previous studies show that for cooperative localization to work, each vehicle should have a direct or indirect path to at least two known landmarks \cite{sharma2014observability}. This condition is very limiting in environments with a low number of landmarks. Hence in this paper, we find a relationship between vehicle uncertainty and path length to the landmarks and then use that relationship to formulate and solve an NMPC problem to guarantee that the covariance does not exceed a specified threshold and desired localization accuracy is achieved while performing individual missions. A moving horizon estimation (MHE) scheme is used to estimate the vehicle states. 

A graphical representation of the proposed solution using NMPC combined with MHE to tackle the cooperative localization and path planning problem is shown in Fig.~\ref{fig:block}. The optimal path is different from the shortest path since the latter may not satisfy localization constraints. The components of the block diagram are explained in the subsequent sections. In the first step of the control scheme, vehicle states are estimated by the MHE block using available measurements from the sensors (Sec.~\ref{sec:MHE}). The second step is the calculation of estimation covariances for the NMPC prediction window, which is accomplished by the covariance calculator block that contains the derived analytical expression (Sec.~\ref{sec:analysis}). In the third and final step, the NMPC controller computes the control actions for the vehicles (Sec.~\ref{sec:NMPC}).

	\section{Moving horizon estimation}\label{sec:MHE}
Moving horizon estimation (MHE) uses optimization techniques to determine state trajectories that best fit a series of measurements acquired over a finite time interval. It uses the exact nonlinear models of the available measurements and system dynamics for estimation. Also, there is another advantage of including the state/control constraints in the formulation, which helps in bounding the estimates. Like NMPC, MHE also has three main components, 1) an internal dynamic model of the process, 2) a history of past measurements, and 3) an optimization cost function over the estimation horizon.

The model used for estimation is given as:
\begin{eqnarray}
X(k)&=&f\left( X(k-1),\boldsymbol{\omega}(k),k\right) +q(k), \label{eq:f}\\
z(k)&=&h\left( X(k),\boldsymbol{\omega}(k),k\right) + \mu(k), \label{eq:z}
\end{eqnarray}	
where, $f(\cdot)$ and $h(\cdot) $ represent the state model and the observation model respectively. ${X}(k)$ and $ z(k) $ are the system states and measurements at the $k^{\mathrm{th}}$ time instant. The vectors $ q(k) $ and $ \mu(k) $ are the process and measurement noises which are assumed to be additive and zero mean white Gaussian noises with covariance $ Q $ and $ \Gamma $ respectively. $ f(\cdot) $ is defined as
	\begin{equation}
	f=\begin{bmatrix}
	x_{1}(k)\\
	y_{1}(k)\\
	\psi_{1}(k)\\
	\vdots\\
	x_{n_v}(k)\\
	y_{n_v}(k)\\
	\psi_{n_v}(k)
	\end{bmatrix}=
	\begin{bmatrix}
	x_{1}(k-1)+T_sv\cos\psi_{1}(k-1)\\
	y_{1}(k-1)+T_sv\sin\psi_{1}(k-1)\\
	\psi_{1}(k-1)+T_s\omega_1(k-1)\\
	\vdots\\
	x_{n_v}(k-1)+T_sv\cos\psi_{n_v}(k-1)\\
	y_{n_v}(k-1)+T_sv\sin\psi_{n_v}(k-1)\\
	\psi_{n_v}(k-1)+T_s\omega_{n_v}(k-1)
	\end{bmatrix},
	\end{equation}
where $ T_s $ is the sampling time used for discretization.

Let $m$ be the current time step, $N_E$ is the estimation horizon, and  we denote $\tau=m-N_E$ for simplicity. We  formulate the moving horizon estimation problem 
\begin{equation}
\min_{X}J= \lVert X_{\tau} - \hat{X}_{\tau} \rVert_{P^{-1}_{\tau}}^2  + \sum_{k=\tau}^{m} \lVert h(X_{k}) - z_{k} \rVert_{\Gamma^{-1}}^2 , 
\label{eq:MHE_cost}
\end{equation} 
	subject to:
	\begin{align*}
	X_{k+1}&= f(X_{k},\boldsymbol{\omega}_{k}),\\
	\boldsymbol{\omega}&\in\left[ \boldsymbol{\omega}^{-},\boldsymbol{\omega}^{+}\right], 
	\end{align*}
where  $\hat{X}$ is the estimated states, $P$ is the estimation covariance matrix, and $\Gamma$ is the measurement covariance. It is assumed that each vehicle can measure relative bearing to other vehicles and landmarks that are in the sensor's field-of-view ($R_s$). Relative bearing from the $i^{th}$ vehicle to the $j^{th}$ vehicle or landmark is given by the measurement model:
\begin{equation}
h(X)=\tan^{-1} \left( \frac{y_{j}-y_{i}}{x_{j}-x_{i}}\right)-\psi_i.
\end{equation} 

The first term in (\ref{eq:MHE_cost}) is known as the arrival cost and it plays an important role in stabilizing the estimator. It penalizes the deviation of the first state in the moving horizon window and its previous estimate $ \hat{X}_{\tau} $. The weighting matrix $ P $ is given by~\cite{rao2003constrained}
\begin{eqnarray}
P_{k+1}&=&Q+\nabla F_{X}( P_k-  P_k\nabla H_{X}^{T}( \nabla H_{X}P_k\nabla H_{X}^{T}+\Gamma) ^{-1}\nonumber\\
&&\nabla H_{X}P_k) \nabla F_{X}^{T},\nonumber
\end{eqnarray}
where $Q$ is the state covariance matrix, and $ \nabla F_{X},\nabla H_{X} $ are the Jacobians of $ f $ and $ h $. 
The second term in (\ref{eq:MHE_cost}) penalizes the change in predicted measurements $ h(X_{k}) $ from the actual measurements $ z_{k} $.

Now, we look into the stability of the moving horizon estimator. The following assumptions are required for proving the stability result.

\begin{assumption}
The initial state $X_0$ and the control input $\omega$ are such that, for any noise $q$, the system trajectory $X$ lies in a compact set $\chi$ and $\omega$ in a compact set $U$.
\label{assump:MHE1}	
\end{assumption}
\begin{assumption}
	The functions $f$ and $h$ are $C^2$ functions w.r.t $X$ on $ co(\chi) $ for every $\omega \in U$, where $co(\chi)$ is the convex closure of $ \chi $.
	\label{assump:MHE2}
\end{assumption}

Observation map for a horizon $N_E+1$ can be defined as 
\begin{equation}
F^{N_E}(X,\omega,q) = \begin{bmatrix}
h(X_{\tau}) \\
h \circ f^{\tau}(X_{\tau})\\
\cdot\\
\cdot\\
\cdot\\
h \circ f^{m-1} \circ \cdot \cdot \cdot f^{\tau}(X_{\tau})
\end{bmatrix},
\end{equation}
where $\circ$ is function composition. Then it is possible to re-write equation (\ref{eq:z}) as
\begin{equation}
z_{\tau} = F^{N_E}(X,\omega,q) + \mu_{\tau},
\end{equation} 
and modify the cost function as
\begin{equation}
J_m(X_{\tau,\hat{X}_{\tau}})= \lVert X_{\tau} - \hat{X}_{\tau} \rVert_{\mathbb{P}_{\tau}}^2  +\lVert F^{N_E}(X_{\tau},\omega_{\tau},q_{\tau}) - z_{\tau} \rVert_{\mathbf{\Gamma}}^2,       
\end{equation} 
where, $ \mathbf{\Gamma} = I_{N_E+1} \otimes \Gamma^{-1} $, where $\otimes$ is the Kronecker product.

Now, let's consider the following remarks:
\begin{remark}
	System (\ref{eq:f}), (\ref{eq:z}) is said to be observable in $N_E+1$ steps if there exists a $K$-function $\phi(\cdot)$ such that
	$
		\phi\left( \lVert x_1-x_2 \rVert^2 \right) \leq \lVert F^{N_E}(x_1,\omega,0) - F^{N_E}(x_2,\omega,0) \rVert^2,  
	$   
	$\forall x_1,x_2 \in \chi$ and $\forall \omega \in U^{N_E}$.
	\label{rem:MHE1}
\end{remark}
\begin{remark}
	 If the observability matrix $ \frac{\partial F^{N_E}(X,\omega,0)}{\partial X} $ has full rank, then the system is said to be observable in $N_E+1$ steps with finite sensitivity $1/\delta$ if the $K$-function $\phi(\cdot)$ satisfies the following condition
	\begin{equation}
		\delta = \inf_{x_1,x_2\in \chi;x_1 \neq x_2} \frac{\phi\left( \lVert x_1-x_2 \rVert^2 \right)}{ \lVert x_1-x_2 \rVert^2 } \geq 0.
	\end{equation}
\label{rem:MHE2}
\end{remark} 
Let $k_f$ be an upper bound on the Lipschitz constant of $f(X,\omega)$ w.r.t $X$ on $\chi$ for every $\omega \in U$ and $\mathbb{P}$ is diagonal with $ \mathbb{P} = pI_n, p>0 $. Let
\begin{equation}
r_{\mu} = \max_{\mu\in M} \lVert \mu \rVert^2,
\end{equation}
where $M$ is a compact set with $0\in M$. 

Stability of the estimator is proved using the results from \cite{alessandri1999neural,rao2003constrained,alessandri2008moving,alessandri2010advances}. 
Consider the cost function defined as: 
\begin{equation}
	J= \lVert X_{\tau} - \hat{X}_{\tau} \rVert_{\mathbb{P}_{\tau}}^2  + \sum_{k=\tau}^{m} \lVert h(X_{k}) - z_{k} \rVert_{\Gamma^{-1}}^2,
	\label{eq:stab_cost}
\end{equation}
then we can state the following theorem~\cite{alessandri2008moving,alessandri2010advances}.

\begin{theorem}
If the Assumptions \ref{assump:MHE1},\ref{assump:MHE2} are satisfied and the Remarks \ref{rem:MHE1},\ref{rem:MHE2} hold, then there exists an upper bound defined by
\begin{equation}
\lVert X_{\tau} - \hat{X}_{\tau} \rVert^2 \leq \zeta_{\tau},
\end{equation} 
where $\zeta_m$ is found using the equation
\begin{equation}
\zeta_{m+1} = \left(\frac{c_1k_fp}{p+c_2\delta}\right)\zeta_m + \left(\frac{c_3}{p+c_2\delta}\right)r_{\mu},
\label{eq:bound}
\end{equation}
$c_1,c_2$, and $c_3$ are positive constants. Let
\begin{equation}
a(p,\delta) = \frac{c_1k_fp}{(p+c_2\delta)},
\end{equation}
and if $p$ is selected such that $ a(p,\delta) <1 $, then the dynamics of (\ref{eq:bound}) is asymptotically stable. 
\end{theorem}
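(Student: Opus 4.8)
The plan is to establish the bound by induction on the current time step $m$, taking as the inductive hypothesis precisely $\lVert X_\tau - \hat X_\tau\rVert^2 \le \zeta_\tau$, and then to observe that the asymptotic stability of the bound dynamics is nothing more than contractivity of a scalar affine recursion. The base case is supplied by Assumption~\ref{assump:MHE1}: since the true trajectory stays in the compact set $\chi$, the initial estimation error is finite and $\zeta_0$ may be taken to be any such bound. The whole argument follows the moving-horizon stability analysis of \cite{rao2003constrained,alessandri2008moving,alessandri2010advances}; the content specific to this paper is checking that the kinematic model in \eqref{eq:f} and the bearing model in \eqref{eq:z} meet its hypotheses (which is what Assumptions~\ref{assump:MHE1}--\ref{assump:MHE2} and Remarks~\ref{rem:MHE1}--\ref{rem:MHE2} encode).

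For the inductive step I would bound the optimal cost $J^\star$ of \eqref{eq:stab_cost} from two sides. For the upper bound, feed the true trajectory into $J$: the output-residual sum collapses to $\sum_{k=\tau}^m \lVert\mu_k\rVert^2_{\Gamma^{-1}} \le c'\, r_\mu$, while the arrival term equals $p\lVert X_\tau-\hat X_\tau\rVert^2 \le p\,\zeta_{m-1}$ by the inductive hypothesis at the previous estimation instant together with $\mathbb{P}=pI_n$, so $J^\star \le p\,\zeta_{m-1}+c'\,r_\mu$. For the lower bound, write $F^{N_E}(\hat X_\tau)-z_\tau = [F^{N_E}(\hat X_\tau)-F^{N_E}(X_\tau)]-\mu_\tau$; a Young inequality leaves the output term at least $\tfrac12\lVert F^{N_E}(\hat X_\tau)-F^{N_E}(X_\tau)\rVert^2_{\mathbf\Gamma}-\lVert\mu_\tau\rVert^2_{\mathbf\Gamma}$, Remark~\ref{rem:MHE1} bounds the first piece below by $\phi(\lVert\hat X_\tau-X_\tau\rVert^2)$ up to the constant in $\mathbf\Gamma$, and Remark~\ref{rem:MHE2} replaces $\phi(s)$ by $\delta s$; together with the non-negative arrival term (carrying the factor $p$) and one more triangle inequality relating $\hat X_\tau$ to its arrival reference, this yields $J^\star \ge (c_2\delta+\tfrac p2)\lVert\hat X_\tau-X_\tau\rVert^2 - c''\,r_\mu$. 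Equating the two bounds gives $\lVert\hat X_\tau-X_\tau\rVert^2 \le (p\,\zeta_{m-1}+c_3 r_\mu)/(p+c_2\delta)$.

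It then remains to transport the error from the window start to the current state. Since the optimal trajectory satisfies $X_{k+1}=f(X_k,\omega_k)$ exactly, $\hat X_m$ is $f$ composed $N_E$ times applied to $\hat X_\tau$, and the same holds for the true trajectory up to process noise; with $k_f$ a (finite, by Assumptions~\ref{assump:MHE1}--\ref{assump:MHE2}) Lipschitz constant of $f$ on $\chi$ this gives $\lVert X_m-\hat X_m\rVert^2 \le c_1 k_f\lVert X_\tau-\hat X_\tau\rVert^2 + (\text{terms in }r_\mu)$. Substituting the previous display produces exactly $\zeta_m = a(p,\delta)\,\zeta_{m-1}+\tfrac{c_3}{p+c_2\delta}r_\mu$ with $a(p,\delta)=c_1 k_f p/(p+c_2\delta)$, i.e.\ \eqref{eq:bound} after re-indexing. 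Finally, $a(p,\delta)$ increases in $p$ from $0$ toward $c_1 k_f$, so a small enough $p>0$ makes $a<1$ for any $\delta>0$; for such $p$ the map $\zeta\mapsto a\zeta+b$ is a contraction, the homogeneous part of \eqref{eq:bound} decays geometrically, and $\zeta_m$ converges to the fixed point $\tfrac{c_3 r_\mu}{(1-a)(p+c_2\delta)}$, which is the claimed asymptotic stability of the error-bound dynamics.

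The main obstacle is the lower-bound step: one must track how the weighting matrices $\mathbb{P}=pI_n$ and $\mathbf\Gamma=I_{N_E+1}\otimes\Gamma^{-1}$ interact with the $K$-function $\phi$ and the sensitivity $\delta$ so that the coefficient emerges as $p+c_2\delta$, and -- more delicately -- to argue that $c_1,c_2,c_3$ and $k_f$ can be chosen \emph{uniformly in $m$}; this uniformity is precisely what Assumption~\ref{assump:MHE1} (trajectories confined to a compact set) and Assumption~\ref{assump:MHE2} ($C^2$ regularity on the convex closure) are there to deliver, since otherwise the per-step constants could degrade and \eqref{eq:bound} would not close. A remaining subtlety is that Remarks~\ref{rem:MHE1}--\ref{rem:MHE2} are phrased for the noise-free map $F^{N_E}(\cdot,\cdot,0)$ whereas \eqref{eq:f} carries process noise $q$, so a bounded process-noise contribution must be folded into the $r_\mu$ term (or the bound read modulo it).
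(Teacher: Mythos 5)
Your proposal follows essentially the same route as the paper's proof (which is itself a summary of the argument in Alessandri et al.): sandwich the optimal cost $J_m^*$ between an upper bound obtained by substituting the true trajectory and a lower bound obtained from a Young-type decomposition of the output residual, invoke Remarks~\ref{rem:MHE1}--\ref{rem:MHE2} to convert $\lVert F^{N_E}(X_\tau^o)-F^{N_E}(\hat{X}_\tau^*)\rVert^2$ into $\delta\lVert X_\tau^o-\hat{X}_\tau^*\rVert^2$, close the recursion with the Lipschitz constant of $f$, and read off geometric convergence of the scalar affine recursion when $a(p,\delta)<1$. Your identification of where the compactness and $C^2$ assumptions enter (uniformity of the constants in $m$) and of the fixed point $\beta/(1-a)$ also matches the paper.

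One step of your plan, as written, would not produce the stated coefficient. You propose to ``transport the error from the window start to the current state'' by composing $f$ a total of $N_E$ times, claiming this gives $\lVert X_m-\hat{X}_m\rVert^2 \le c_1 k_f\lVert X_\tau-\hat{X}_\tau\rVert^2 + (\text{terms in } r_\mu)$. An $N_E$-fold composition has Lipschitz constant $k_f^{N_E}$, so the squared-error bound would carry $k_f^{2N_E}$, not a single power of $k_f$; absorbing that into $c_1$ would make $c_1$ depend on $k_f$ and $N_E$ and would generally destroy the condition $a(p,\delta)<1$. The point is that this transport is not needed: the theorem bounds the error at the window start $\tau=m-N_E$, and the recursion $\zeta_{m+1}=a\,\zeta_m+\beta$ advances $\tau$ by exactly one step per iteration. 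The Lipschitz constant enters only once, through the single-step relation between the arrival-cost reference of the current window and the optimal estimate of the previous one, namely $\hat{X}_\tau = f(\hat{X}_{\tau-1}^*,\omega_{\tau-1})$ versus $X_\tau^o = f(X_{\tau-1}^o,\omega_{\tau-1})+q_{\tau-1}$, which is how the paper obtains $\lVert X_\tau^o - X_\tau^*\rVert^2 \le 2k_f\lVert X_{\tau-1}^o-\hat{X}_{\tau-1}^*\rVert^2 + 2r_\mu$. With that correction your induction closes exactly as in the paper. Your final remark about folding the bounded process noise into the $r_\mu$ term is consistent with how the reference proof treats it.
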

\begin{proof}
	A summary of the proof given by~\cite{alessandri2008moving} is detailed here for completeness. The proof is based on defining upper and lower bounds on the optimal cost $J_{m}^*$, which is the cost corresponding to an optimal estimate $\hat{X}_{\tau}^{*}$. First, the upper bound on $J_{m}^*$ should be defined. Let us define $X_{m}^o$ as the true value of the state $X$ at time $m$ and assume that $\Gamma = I$. We have that
	\begin{equation}
	    J_{m}^* \leq \lVert X_{\tau}^o - X_{\tau}^* \rVert_{\mathbb{P}_{\tau}}^2  + \sum_{k=\tau}^{m} \lVert F^{N_E}(X_{k}^o) - z_{k} \rVert^2,
	\end{equation}
	which can be modified as
	\begin{equation}
	   J_{m}^* \leq \lVert X_{\tau}^o - X_{\tau}^* \rVert_{\mathbb{P}_{\tau}}^2 + C, \label{eqn:lower_bound}  
	\end{equation}
	where $C$ is a positive constant (please see Lemma. 1 from \cite{alessandri2008moving}). Next, the upper bound on $J_{m}^*$ is defined. We can write
	{
	\begin{eqnarray}
	    \lVert F^{N_E}(X_{\tau}^o) - F^{N_E}(\hat{X}_{\tau}) \rVert^2 =&& \nonumber\\
	    \lVert [z_{\tau} - F^{N_E}(\hat{X}_{\tau})] - [z_{\tau} - F^{N_E}(X_{\tau}^o)] \rVert^2,&&\\
	    \lVert F^{N_E}(X_{\tau}^o) - F^{N_E}(\hat{X}_{\tau}) \rVert^2  \leq&& \nonumber\\ 2 \lVert z_{\tau} - F^{N_E}(\hat{X}_{\tau}) \rVert^2 + 2 \lVert z_{\tau} - F^{N_E}(X_{\tau}^o)] \rVert^2,&& \\
	    \lVert z_{\tau} - F^{N_E}(\hat{X}_{\tau}) \rVert^2  \geq&& \nonumber \\ \frac{1}{2} \lVert F^{N_E}(X_{\tau}^o) - F^{N_E}(\hat{X}_{\tau}) \rVert^2 - \lVert z_{\tau} - F^{N_E}(X_{\tau}^o)] \rVert^2.&&
	\end{eqnarray}
}
	From (\ref{eqn:lower_bound}), we can write
	\begin{equation}
	    \lVert z_{\tau} - F^{N_E}(X_{\tau}^o)] \rVert^2 < C.
	\end{equation}
	Hence, we obtain
	{	\begin{equation}
	    \lVert z_{\tau} - F^{N_E}(\hat{X}_{\tau}) \rVert^2 \geq \frac{1}{2} \lVert F^{N_E}(X_{\tau}^o) - F^{N_E}(\hat{X}_{\tau}) \rVert^2 - C. \nonumber
	\end{equation}}
	By using a similar procedure, we can write
		{
	\begin{equation}
	    \lVert X_{\tau}^* - \hat{X}_{\tau} \rVert^2 \geq \frac{1}{2} \lVert X_{\tau}^o - \hat{X}_{\tau} \rVert^2 - \lVert X_{\tau}^o - X_{\tau}^* \rVert^2. \nonumber
	\end{equation}}
	Now the upper bound can be defined as
	\begin{eqnarray}
	    J_{m}^* &\geq& \frac{1}{2} \lVert X_{\tau}^o - \hat{X}_{\tau}^* \rVert_{\mathbb{P}_{\tau}}^2 + \frac{1}{2} \lVert F^{N_E}(X_{\tau}^o) - F^{N_E}(\hat{X}_{\tau}^*) \rVert^2 - \nonumber \\
&&	    \lVert X_{\tau}^o - X_{\tau}^* \rVert_{\mathbb{P}_{\tau}}^2 - C. \label{eqn:upper_bound}
	\end{eqnarray}
	Now, by combining the bounds (\ref{eqn:lower_bound}) and (\ref{eqn:upper_bound}), and rewriting we get
{
	\begin{eqnarray}
	    \frac{1}{2} \lVert X_{\tau}^o - \hat{X}_{\tau}^* \rVert_{\mathbb{P}_{\tau}}^2 +  \frac{1}{2} \lVert F^{N_E}(X_{\tau}^o) -  F^{N_E}(\hat{X}_{\tau}^*) \rVert^2 \nonumber && \\\leq 2 \lVert X_{\tau}^o - X_{\tau}^* \rVert_{\mathbb{P}_{\tau}}^2 + 2C.\nonumber
	\end{eqnarray}}
	According to the Remarks \ref{rem:MHE1} and \ref{rem:MHE2}, the above equation can be written as
	{
	\begin{equation*}
	    \lVert F^{N_E}(X_{\tau}^o) - F^{N_E}(\hat{X}_{\tau}^*) \rVert^2 = \phi \left( \lVert X_{\tau}^o - \hat{X}_{\tau}^* \rVert^2 \right),
	\end{equation*}}
	and
	\begin{equation*}
	    \delta \lVert X_{\tau}^o - \hat{X}_{\tau}^* \rVert^2 \leq \phi \left( \lVert X_{\tau}^o - \hat{X}_{\tau}^* \rVert^2 \right).\nonumber
	\end{equation*}
	Now, it is possible to define the bound on the estimation error as
	\begin{equation}
	    \lVert X_{\tau}^o - \hat{X}_{\tau}^* \rVert^2 \leq \frac{4p}{p+\delta}\lVert X_{\tau}^o - X_{\tau}^* \rVert^2 + \frac{4}{p+\delta}C.\nonumber
	\end{equation}
	Using the Lipschitz continuity of $f(\cdot)$, it can be written that
	\begin{equation}
	    \lVert X_{\tau}^o - X_{\tau}^* \rVert^2 = 2k_f \lVert X_{\tau-1}^o - \hat{X}_{\tau-1}^* \rVert^2 + 2r_{\mu}.\nonumber
	\end{equation}
Hence,
\begin{equation*}
    \lVert X_{\tau}^o - \hat{X}_{\tau}^* \rVert^2 \leq \zeta_{\tau}.
\end{equation*}
It can also be deduced that if $\zeta_m < \zeta_{m-1}$ and $a(p,\delta)<1$, then $\zeta_m$ tends to $\frac{\beta}{1-a(p,\delta)}$ as $m \longrightarrow +\infty$ (please see Theorem~1 from~\cite{alessandri1999neural}), where $\beta~=~ \left(\frac{c_3}{p+c_2\delta}\right)r_{\mu} $. 	
\end{proof}

The following section presents the derivation of the analytical expression for calculating the covariances using the path information. This result will be later used for predicting covariances for the NMPC cost function.
	\section{Covariance calculation}\label{sec:analysis}


Consider an example configuration of two vehicles as shown   in Fig.~\ref{fig:3a}(i), where the vehicles are represented by $1$ and $2$ and two landmarks by $a$ and $b$. In order to understand how the paths/connections/measurements from a landmark to a vehicle influence the uncertainty of the vehicle states, we consider each landmark separately and analyze. Consider the segment $a-1-2$ of the graph in Fig.~\ref{fig:3a}(i), as shown in Fig.~\ref{fig:3a}(ii). The observability matrix for the system can be written as
\begin{equation}
O = \begin{bmatrix}
oa1 & 0 \\
o12 & -o12
\end{bmatrix},
\end{equation}
where $oa1,o12$, and $-o12$ are the derivatives of the measurements with respect to the vehicle states. For example, $oa1$ is the derivative of the measurement between the landmark $a$ and the vehicle-$1$ with respect to the vehicle-$1$. Since there is no measurement between the landmark $a$ and the vehicle-$2$, the corresponding entry ($O_{12}$) is zero. Note that the size of $O$ depends on the number of edges and it may not be a square matrix. Assuming the measurement covariance matrix $ \Gamma = I $, and zero-mean white Gaussian noise, the observability grammian is defined as $ O^TO $ and the covariance matrix $ P $ is written as 
\begin{eqnarray}
P &\leq& (O^T\Gamma^{-1}O)^{-1}, \\
 &\leq& \begin{bmatrix}
\frac{1}{oa1^2} & \frac{1}{oa1^2} \\
\frac{1}{oa1^2} & \frac{1}{o12^2}+\frac{1}{oa1^2} \label{eq:p2a}
\end{bmatrix}.
\end{eqnarray}
The first element of the $P$ matrix corresponds to the vehicle-$1$ connecting to the landmark $a$, hence $oa1$ (let us discard the square and fraction for easy understanding). The element $ \frac{1}{o12^2}+\frac{1}{oa1^2} $ of the $P$ matrix corresponds to the vehicle-$2$. Observing that it is connected to the landmark $a$ through vehicle-$1$, we can see both $oa1$ and $o12$ are present in the entry. Next, we consider the section $1-2-b$ of the graph, as shown in Fig.~\ref{fig:3a}(iii). The observability and covariance matrices for the system are written as
\begin{eqnarray}
O = \begin{bmatrix}
0 & ob2 \\
o12 & -o12 
\end{bmatrix},
P \leq \begin{bmatrix}
\frac{1}{o12^2}+\frac{1}{ob2^2} & \frac{1}{ob2^2} \\
\frac{1}{ob2^2} & \frac{1}{ob2^2}  \label{eq:p2b}
\end{bmatrix}.
\end{eqnarray}
The first element of $P$ indicates that vehicle-$1$ is connected to the landmark $b$ through vehicle-$2$. The last entry shows that vehicle-$2$ is directly connected to the landmark $b$ and hence  only $ob2$ is present. 
	\begin{figure*}
		\centering
		\begin{subfigure}{4cm}
		\includegraphics[width=4cm]{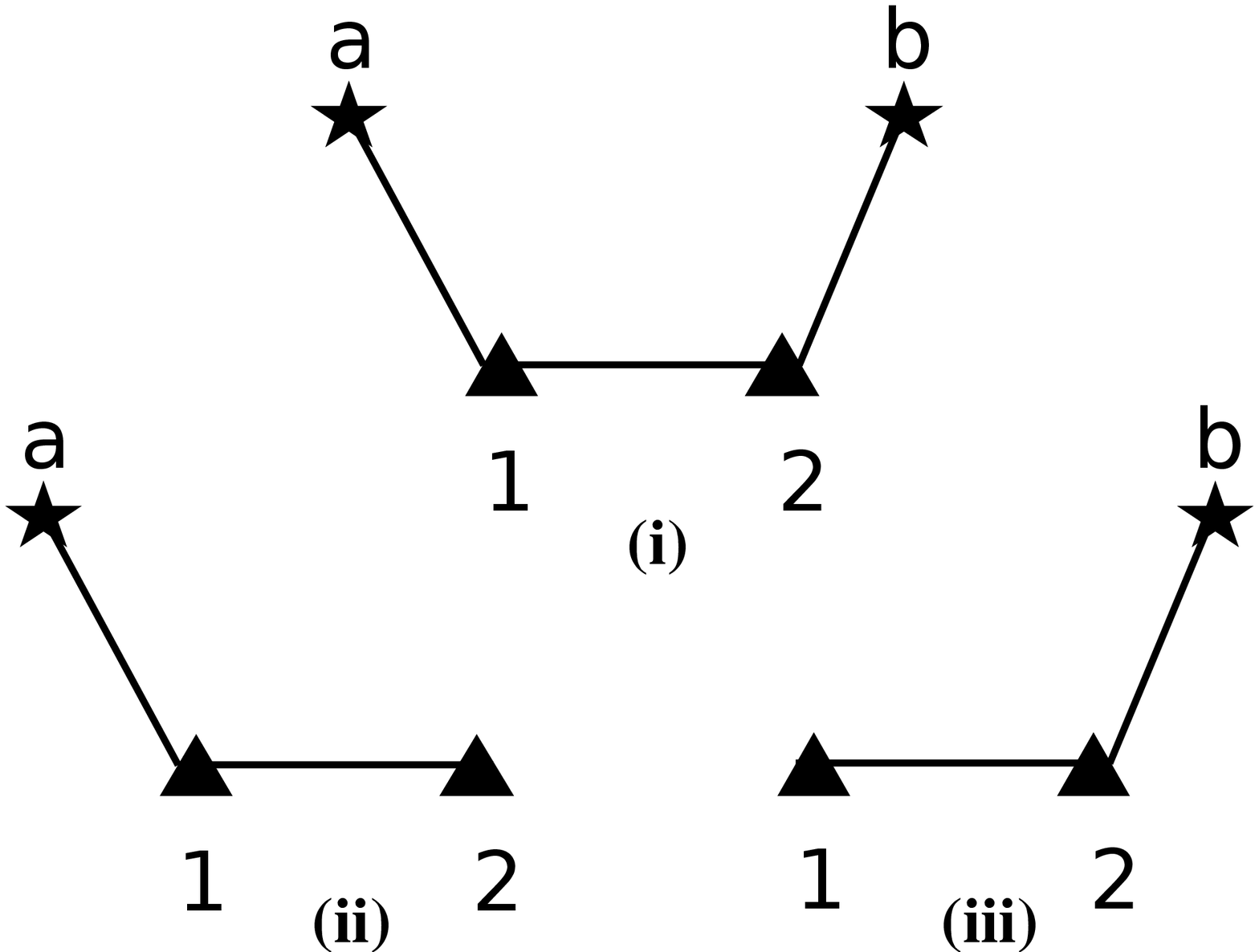}
		\caption{}\label{fig:3a}
		\end{subfigure}\hspace{10mm}
			\begin{subfigure}{4cm}
		\includegraphics[width=4cm]{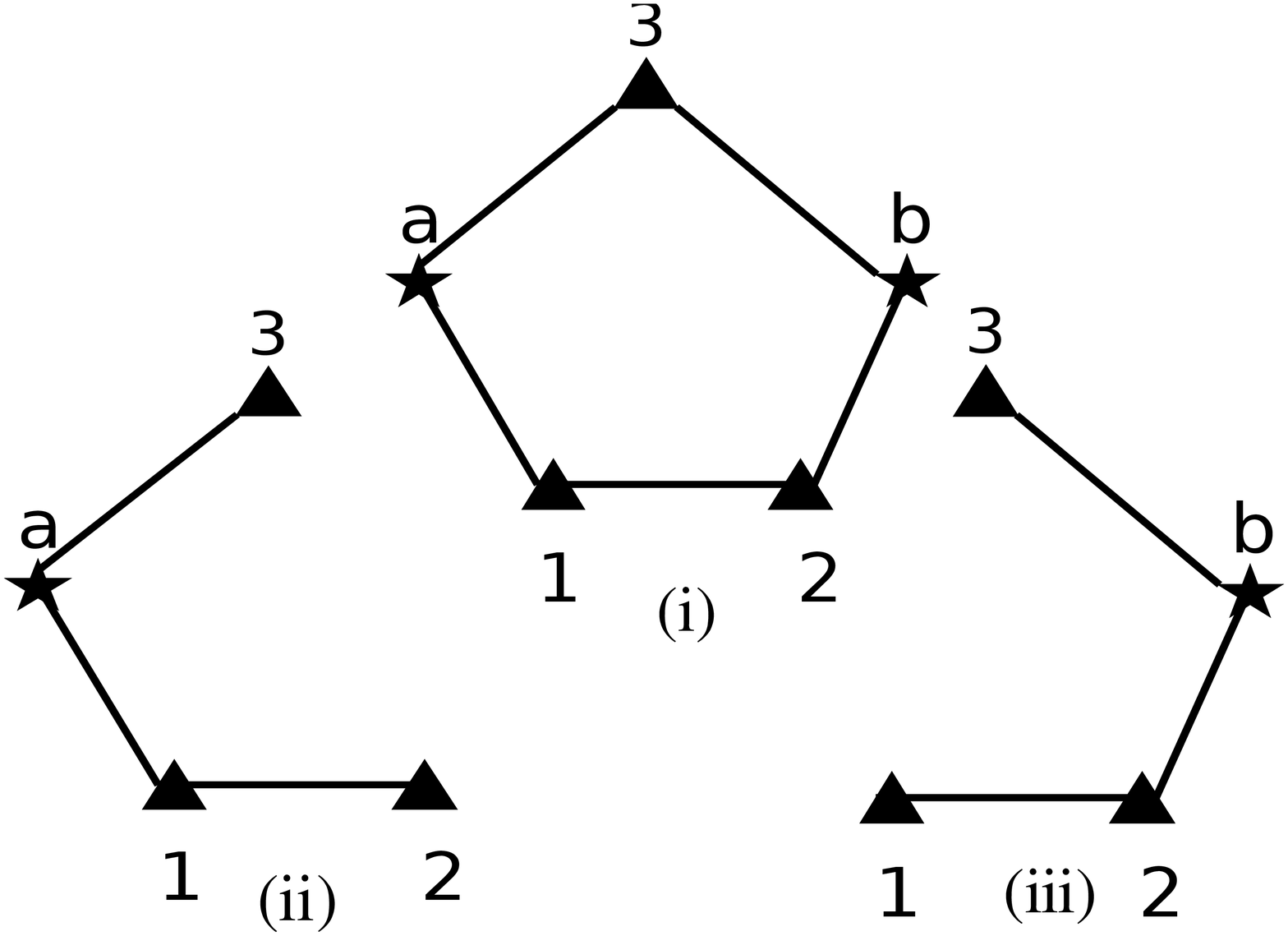}
				\caption{} \label{fig:3b}
		\end{subfigure}\hspace{10mm}
	\begin{subfigure}{4cm}
		\includegraphics[width=6cm]{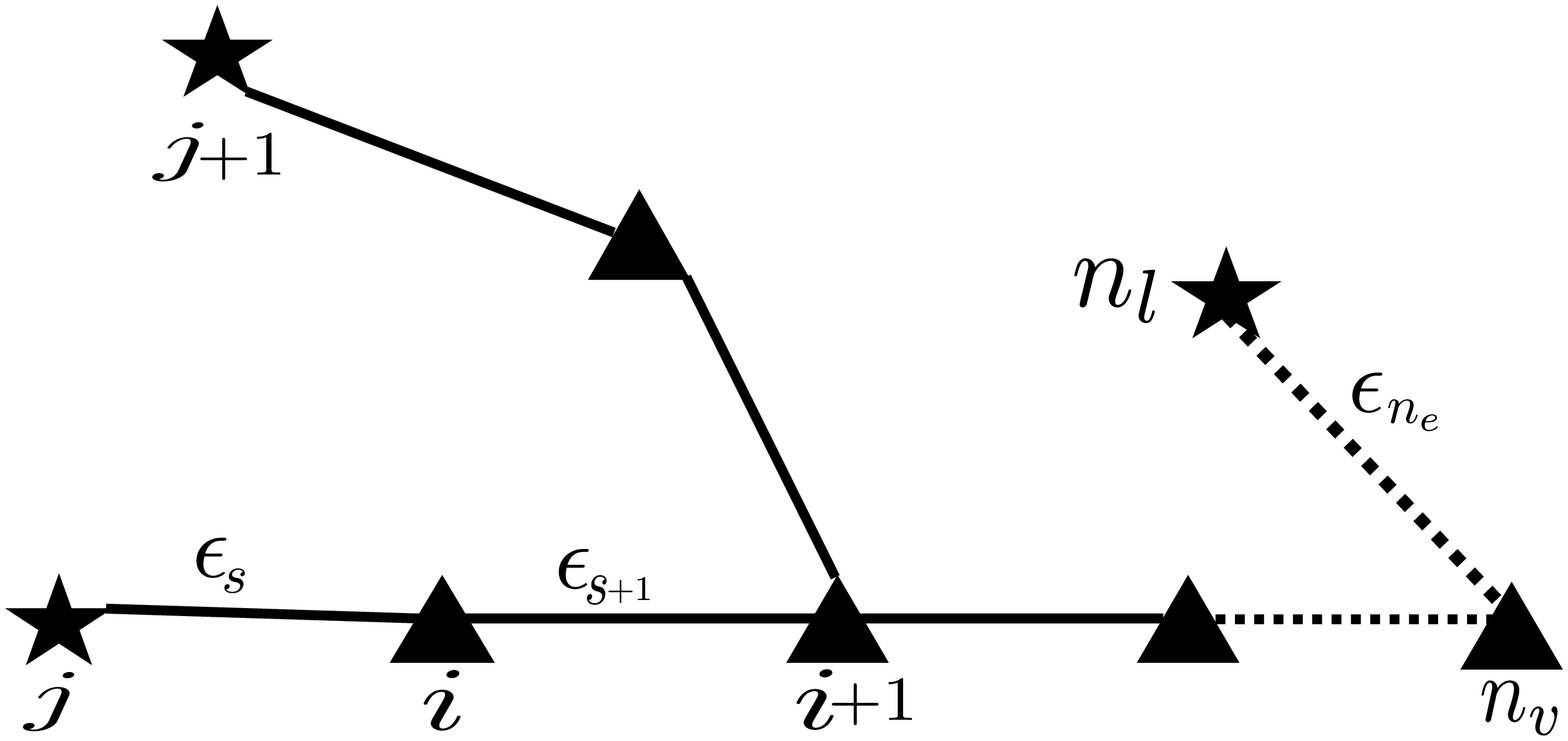}
				\caption{}
		\label{fig:config5}
	\end{subfigure}
		\caption{(a) Different configurations of two vehicles and two landmarks. (b) Different configuration of the system with 3 vehicles and 2 landmarks 
			(c) Notations for a general multi-vehicle-landmark RPMG.}
		\label{fig:config2}
	\end{figure*}


Next, we extend similar analysis for a three vehicle configuration  as shown in Fig.~\ref{fig:3b}(i). Consider the section $3-a-1-2$ of the graph, shown in Fig.~\ref{fig:3b}(ii). The observability matrix and covariance matrix $P $ for this configuration are 
\begin{eqnarray}
O = \begin{bmatrix}
oa1 & 0 & 0 \\
o12 & -o12 & 0\\
0 & 0 & oa3
\end{bmatrix},
\label{eq:ob_config3}
P \leq \begin{bmatrix}
\frac{1}{oa1^2} & \frac{1}{oa1^2}  & 0\\
\frac{1}{oa1^2} & \frac{1}{o12^2}+\frac{1}{oa1^2} & 0\\
0 & 0 & \frac{1}{oa3^2}  \label{eq:p3a}
\end{bmatrix}.\nonumber
\end{eqnarray} 
The first element of the $P$ matrix contain only $oa1$ since the vehicle-$1$ is directly connected to the landmark $a$. The vehicle-$2$ is connected to the landmark $a$ through vehicle-$1$, hence both $oa1$ and $o12$ can be seen in the corresponding entry of the $P$ matrix. Since vehicle-$3$ is directly connected to the landmark, the last element of $P$ contains only $oa3$ as expected. Now, let us take the section $1-2-b-3$ of the graph, shown in Fig.~\ref{fig:3b}(iii). The observability and covariance matrices are given by
\begin{eqnarray}
O = \begin{bmatrix}
0 & ob2 & 0 \\
o12 & -o12 & 0 \\
0 & 0 & ob3
\end{bmatrix},
P \leq \begin{bmatrix}
\frac{1}{o12^2}+\frac{1}{ob2^2} & \frac{1}{ob2^2} & 0 \\
\frac{1}{ob2^2} & \frac{1}{ob2^2} & 0\\
0 & 0 & \frac{1}{ob3^2}  \label{eq:p3b}
\end{bmatrix}.\nonumber
\end{eqnarray}
Vehicle-$2$ and vehicle-$3$ are directly connected to the landmark $b$. Therefore, the corresponding entries in the $P$ matrix contains only $ob2$ and $ob3$. The vehicle-$1$ is connected to the landmark $b$ through vehicle-$2$ and this information is clearly reflected in the first entry of $P$.

Using the above example configuration analysis, we can further extend the analysis to a general result with $n_v$ vehicle and $n_l$ landmarks as shown in Fig.~\ref{fig:config5}. The landmarks are represented using stars, and the vehicles are represented using triangles. All the vehicles are denoted by  $i,i+1,\ldots,n_v$, landmarks as $j,j+1,\ldots,n_l$, and edges connecting the vehicles and landmarks as $\epsilon_s,\epsilon_{s+1},\ldots,\epsilon_{n_e}$, where $i=j=s=1$. The observability vector associated with an edge/measurement is represented using $\epsilon$ with the edge number as subscript for simplicity. For example, the observability vector between landmark-$j$ and vehicle-$i$, $oji$ is represented by $\epsilon_s$. Now, the following theorem for a general RPMG can be stated. 

\begin{theorem}\label{thm:covariance}
The covariance associated with the vehicle-$i$, $i=1,\ldots,n_v$, due to the landmark-$j$, $j=1,\ldots,n_l$, is given by 
\begin{equation}
    p_{ij} = \sum_{s \in \mathcal{S}} \frac{1}{\epsilon_s},
\end{equation}
where $s=1,\ldots,n_e$, $\mathcal{S}$ is the set of edges that forms a path from vehicle-$i$ to the landmark-$j$, $n_v$ is the number of vehicles, $n_l$ is the number of landmarks, and $n_e$ is the number of edges in the RPMG.
\end{theorem}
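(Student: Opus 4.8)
The plan is to prove the formula by induction on the length of the path $\mathcal{S}$ from vehicle-$i$ to landmark-$j$, mimicking the chain computations already done for the $2\times2$ and $3\times3$ examples but organizing them through a Schur-complement / Sherman--Morrison recursion. First I would fix the landmark-$j$ and, exactly as in the treatment of Figs.~\ref{fig:3a}--\ref{fig:3b}, restrict attention to the sub-RPMG consisting of the edges that lie on paths from vehicles to landmark-$j$; this sub-RPMG is a tree rooted at landmark-$j$ (when the original graph contains cycles one simply retains the path-defining edges, which is what $\mathcal{S}$ denotes in the statement). Order the vehicle nodes so that every vehicle follows its parent in the tree. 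With $\Gamma=I$ and the scalar per-edge contribution $\epsilon_s$ (the squared measurement Jacobian of edge $s$, playing the role of $oa1^2,\,o12^2,\dots$ in the examples), each edge $s$ joining vehicle $u$ to its parent $v$ adds the rank-one term $\epsilon_s(e_u-e_v)(e_u-e_v)^{T}$ to the information matrix $M=O^{T}O$, with $e_v$ omitted when $v$ is the landmark, and the covariance bound is $P\le M^{-1}$ as used throughout Section~\ref{sec:analysis}.

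Second, I would run the induction. The base case is a vehicle-$i$ joined to landmark-$j$ by a single edge $s$: the corresponding block of $M$ is the scalar $\epsilon_s$, so $p_{ij}=1/\epsilon_s$, as claimed. For the inductive step, assume the formula holds for all vehicles presently in the tree and let vehicle-$i'$ be a leaf attached through edge $s'$ to a vehicle-$i$ already in the tree. Appending this leaf appends the row $\sqrt{\epsilon_{s'}}(e_i-e_{i'})$ to $O$, so the new information matrix is the bordered matrix
\begin{equation*}
M' = \begin{bmatrix} M + \epsilon_{s'}\,e_i e_i^{T} & -\epsilon_{s'}\,e_i \\ -\epsilon_{s'}\,e_i^{T} & \epsilon_{s'} \end{bmatrix}.
\end{equation*}
The $(i',i')$ entry of $(M')^{-1}$ is the reciprocal of the Schur complement of its top-left block, i.e. $\bigl(\epsilon_{s'} - \epsilon_{s'}^{2}\, e_i^{T}(M+\epsilon_{s'}e_ie_i^{T})^{-1}e_i\bigr)^{-1}$. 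Applying Sherman--Morrison to $(M+\epsilon_{s'}e_ie_i^{T})^{-1}$ and inserting the induction hypothesis $e_i^{T}M^{-1}e_i = p_{ij}$, this collapses to $1/\epsilon_{s'} + p_{ij}$, hence $p_{i'j} = p_{ij} + 1/\epsilon_{s'}$. Unrolling the recursion along $\mathcal{S}$ yields $p_{i'j} = \sum_{s\in\mathcal{S}} 1/\epsilon_s$.

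Third, I would check the bookkeeping that the examples only hint at: branches hanging off distinct children of a common node appear as separate diagonal blocks of $M$ (this is precisely why the $3-a-1-2$ configuration produced a block-diagonal $O^{T}O$), and the bordered update at each step alters only the newly added row and column of $M^{-1}$, so the previously computed entries $p_{uj}$ are untouched. Finally I would recall that the inequality in $P\le(O^{T}\Gamma^{-1}O)^{-1}$ is the information (Cram\'er--Rao-type) bound already invoked for the example configurations, so the closed form describes this bound exactly and is consistent with $P$ being replaced by the MHE weighting $\mathbb{P}$ downstream.

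The step I expect to be the main obstacle is precisely the passage from the worked chain examples to a general tree: one must argue cleanly that the sub-RPMG attached to a single landmark may be taken as a tree with $\mathcal{S}$ the unique root-to-node path, and that extending a node that already carries other descendants still leaves the earlier diagonal entries intact under the Schur-complement recursion. Everything after that is the routine Sherman--Morrison algebra that the $2\times2$ and $3\times3$ cases in the text already exhibit.
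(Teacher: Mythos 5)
Your proof is correct, and it takes a genuinely different --- and substantially stronger --- route than the paper's. The paper's proof consists of (i) reading the pattern off the worked two- and three-vehicle covariance matrices in Section~\ref{sec:analysis} (e.g.\ \eqref{eq:p2a}) and asserting that the generalization to $n_v$ vehicles and $n_l$ landmarks is ``straightforward,'' and (ii) a contradiction check on the single configuration of Fig.~\ref{fig:3b}(ii): deleting the term $1/oa1^2$ from $p_{2a}$, or inserting a spurious $1/oa3^2$, and inverting the perturbed $P$ yields an $O^TO$ inconsistent with \eqref{eq:oto_proof}. That argument only excludes two specific perturbations of one example and never actually handles an arbitrary RPMG. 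Your Schur-complement/Sherman--Morrison induction does: the identity $e_i^{T}(M+\epsilon_{s'}e_ie_i^{T})^{-1}e_i = p_{ij}/(1+\epsilon_{s'}p_{ij})$ makes the Schur complement at the new leaf equal to $\epsilon_{s'}/(1+\epsilon_{s'}p_{ij})$, giving exactly $p_{i'j}=p_{ij}+1/\epsilon_{s'}$, and your observation that the Schur complement of the bottom-right block of $M'$ collapses back to $M$ (the rank-one correction cancels) is precisely the bookkeeping the paper never supplies --- it is what guarantees that attaching a new vehicle does not disturb the covariances already computed for its ancestors and siblings. What the paper's approach buys is only brevity and a concrete sanity check; what yours buys is an actual proof for general chains and trees, plus positive definiteness of $M$ for free (the landmark anchor makes $x^{T}Mx=\sum_s\epsilon_s(x_u-x_v)^2$ vanish only at $x=0$). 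Two caveats you should state explicitly, since you inherit them from the paper rather than resolve them: each $\epsilon_s$ is treated as a scalar squared Jacobian entry rather than the full multi-state observability block of a vehicle, and the tree restriction matches only the single-path scope of the theorem, deferring multiple paths to Corollary~\ref{cor:path} exactly as the paper does.
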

\begin{proof}
The covariances associated with each vehicle for a two-vehicle-two-landmark configuration is given by equations \eqref{eq:p2a} and \eqref{eq:p2b}, followed by three vehicles in \eqref{eq:p3a} and \eqref{eq:p3b}. The relation given in Theorem~\ref{thm:covariance} is clearly reflected in the elements of the corresponding covariance matrices. The generalization to $n_v$ vehicles and $n_l$ landmarks is straightforward from the previous analysis.

However, we prove the theorem through  contradiction. Consider Fig.~\ref{fig:3b}(ii). According to Theorem~\ref{thm:covariance}, the term/edge $\frac{1}{oa1^2}$ should be present in $p_{2a}$. Suppose we write the $P$ matrix without that term. The new $P$ matrix and the original $P$ matrix found by observability analysis given by \eqref{eq:p3a} is written side-by-side showing only the element corresponding to $p_{2a}$.  
\begin{equation}
 \begin{bmatrix}
	\cdots & \cdots  & \cdots\\
	\cdots & \frac{1}{o12^2} & \cdots\\
	\cdots & \cdots & \cdots  
\end{bmatrix},
 \begin{bmatrix}
	\cdots & \cdots  & \cdots\\
	\cdots & \frac{1}{o12^2}+\frac{1}{oa1^2} & \cdots\\
	\cdots & \cdots & \cdots  \label{eq:old_new_p3a}
\end{bmatrix}.
\end{equation}

The new $O^TO$ matrix for the system found by inverting the first $P$ matrix will be
\begin{equation}
	O^TO = \begin{bmatrix}
		\frac{oa1^4}{-o12^2+oa1^2} & \frac{o12^2 oa1^2}{o12^2-oa1^2}  & 0\\
		\frac{o12^2 oa1^2}{o12^2-oa1^2} & \frac{o12^2 oa1^2}{-o12^2+oa1^2} & 0\\
		0 & 0 & \frac{1}{oa3^2}  \label{eq:proof1}
	\end{bmatrix},
\end{equation}
which contradicts with the $O^TO$ matrix derived from equation~ \eqref{eq:ob_config3}, which is 
\begin{equation}
 \begin{bmatrix}
		o12^2+oa1^2 & -o12^2  & 0\\
		-o12^2& o12^2 & 0\\
		0 & 0 & oa3^2  
	\end{bmatrix}.
\label{eq:oto_proof}
\end{equation}
Similarly, if an additional term $\frac{1}{oa3^2}$ is present in $p_{2a}$, the corresponding $O^TO$ matrix for the system will be
\begin{equation}
	O^TO = \begin{bmatrix}
		oa1^2+\frac{o12^2oa3^2}{o12^2+oa3^2} & \frac{o12^2 oa3^2}{o12^2+oa3^2}  & 0\\
		-\frac{o12^2 oa3^2}{o12^2+oa3^2} & \frac{o12^2 oa3^2}{o12^2+oa3^2} & 0\\
		0 & 0 & oa3^2  \label{eq:proof2}
	\end{bmatrix},
\end{equation}
which also contradicts with the $O^TO$ matrix given in~\eqref{eq:oto_proof}.
Hence, the relation given in Theorem~\ref{thm:covariance} is always true.
\end{proof}

The following corollaries can be written from Theorem~\ref{thm:covariance}.

\begin{corollary}\label{cor:path}
If there is more than one path from a landmark to a vehicle, and these paths are numbered from $1$ to $g_{ij}$, where $g_{ij}$ is the total number of paths from the landmark-$j$ to the vehicle-$i$, then the total covariance of the vehicle is given by
\begin{equation}
	 p_{ij} = \sum_{\kappa=1}^{g_{ij}} p_{ij}^{\kappa},
\end{equation}
where $i=1,\ldots,n_v$, $j=1,\ldots,n_l$, and $p_{ij}^{\kappa}$ is the covariance of the vehicle-$i$ due to the landmark-$j$ considering only the path-$\kappa$.
\end{corollary}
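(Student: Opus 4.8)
The plan is to build directly on Theorem~\ref{thm:covariance}, which already pins down the covariance contribution of a \emph{single} path, and to promote that result to the multi-path case by induction on $g_{ij}$. First I would make the setting precise: vehicle-$i$ is joined to landmark-$j$ by $g_{ij}$ paths $\mathcal{S}_1,\dots,\mathcal{S}_{g_{ij}}$ that share the endpoints $i$ and $j$ but are otherwise edge-disjoint, and $p_{ij}^{\kappa}=\sum_{s\in\mathcal{S}_\kappa}\tfrac{1}{\epsilon_s}$ is exactly the per-path covariance obtained by applying Theorem~\ref{thm:covariance} to the sub-RPMG consisting of path-$\kappa$ alone.

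For the induction, the base case $g_{ij}=1$ is Theorem~\ref{thm:covariance} verbatim. In the inductive step I would assemble the observability matrix $O$ of the RPMG containing the first $\kappa$ paths as a stacked block matrix, one block of rows per path, with columns indexed by the vehicle states; since the paths are edge-disjoint, the only coupling between blocks is through the shared-endpoint columns. I would then extract the $(i,i)$ entry of the bound $(O^{T}\Gamma^{-1}O)^{-1}$ used throughout Section~\ref{sec:analysis} via a Schur-complement / block-inversion step and show it collapses to $p_{ij}^{(1)}+\dots+p_{ij}^{(\kappa)}$, i.e. the previously accumulated sum plus the newly added path's term. Exactly as in the proof of Theorem~\ref{thm:covariance}, this inductive identity can be cross-checked on the small configurations of Fig.~\ref{fig:3a} and Fig.~\ref{fig:3b} (for instance, inserting the $1$--$2$ edge so vehicle-$1$ reaches landmark $a$ both directly and through vehicle-$2$), and any competing expression can be discarded by re-deriving $O^{T}O$ from the candidate $P$ and exhibiting a contradiction with the $O^{T}O$ dictated by the graph, which is precisely the mechanism already invoked there.

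The step I expect to be the main obstacle is the block-inversion bookkeeping when the paths overlap in interior vehicle nodes rather than being cleanly vertex-disjoint: the shared columns then collect contributions from several blocks, and one must check that the off-diagonal cross terms cancel in the relevant diagonal entry of the inverse. A secondary, more conceptual point worth flagging is that adding a path adds measurements, so the \emph{true} estimation covariance can only shrink; the additive formula should therefore be read in the same conservative, upper-bounding sense as $P\le(O^{T}\Gamma^{-1}O)^{-1}$ in \eqref{eq:p2a}--\eqref{eq:p3b}. The cleanest presentation is thus to carry every covariance statement as such an inequality through the induction, so that the ``$=$'' in the corollary is understood as the same bounding relation used in the rest of Section~\ref{sec:analysis}.
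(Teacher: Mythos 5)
Your route is genuinely different from the paper's. The paper's proof of this corollary contains no matrix computation at all: it simply invokes the additivity of the Fisher information matrix for independent measurements ($I_X(\alpha)=\sum_i I_{X_i}(\alpha)$) and then asserts that, covariance being the inverse of information, the per-path contributions may be added. Your plan instead tries to establish the formula as an identity for the $(i,i)$ entry of $(O^{T}\Gamma^{-1}O)^{-1}$ of the combined graph, by stacking one block of rows per path and extracting that entry via a Schur complement, with an induction on the number of paths.

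The difficulty is that the identity you propose to prove is false, so the inductive step cannot close. Take the simplest two-path configuration: vehicle-$1$ measures landmark $a$ directly (edge $oa1$) and also reaches $a$ through vehicle-$2$ (edges $o12$ and $oa2$). Then
\begin{equation*}
O=\begin{bmatrix} oa1 & 0\\ o12 & -o12\\ 0 & oa2\end{bmatrix},\qquad
\left(O^{T}O\right)^{-1}_{11}=\frac{o12^2+oa2^2}{oa1^2o12^2+oa1^2oa2^2+o12^2oa2^2},
\end{equation*}
whereas the corollary's sum is $\frac{1}{oa1^2}+\frac{1}{o12^2}+\frac{1}{oa2^2}$; with $oa1=o12=oa2=1$ these evaluate to $2/3$ and $3$ respectively. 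A parallel path adds measurements, hence adds information and strictly shrinks the relevant diagonal entry of the Gramian inverse, so that entry can never ``collapse to'' the sum of per-path covariances --- the sum is only a loose upper bound. Your closing caveat about reading everything in the upper-bounding sense is the right instinct, but it undercuts the core of your argument: once the relation is only $\le$, the Schur-complement bookkeeping no longer yields the stated equality and you are proving a different, weaker statement. The paper sidesteps this entirely by working at the information level, where additivity over independent measurements is exact, and only then passing (somewhat loosely) to covariances. If you want to keep your linear-algebraic framing, you should either recast the corollary explicitly as the bound $\left(O^{T}\Gamma^{-1}O\right)^{-1}_{ii}\le\sum_{\kappa} p_{ij}^{\kappa}$ and prove that, or restate the additivity on the information (Gramian) side, where your block decomposition does hold exactly.
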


\begin{corollary}\label{cor:landmark}
If there is more than one landmark connected to a vehicle, then the covariance of the vehicle is given by
\begin{equation}
	 p_i  = \sum_{j \in \mathcal{J}} p_{ij},
\end{equation}
where $i=1,\ldots,n_v$, $j=1,\ldots,n_l$, and $\mathcal{J}$ is the set of landmarks connected to the vehicle-$i$.
\end{corollary}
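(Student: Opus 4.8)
The plan is to mirror the argument used for Theorem~\ref{thm:covariance}, exploiting the fact that the observability Gramian decomposes additively over the landmarks connected to vehicle-$i$. First I would partition the relevant edges of the RPMG into disjoint subsets $\mathcal{E}_j$, $j\in\mathcal{J}$, where $\mathcal{E}_j$ collects the edges lying on the path(s) from vehicle-$i$ to landmark-$j$ (taking the paths to distinct landmarks edge-disjoint, which is the generic situation in the tree-like RPMGs considered). Each $\mathcal{E}_j$ induces an observability matrix $O_j$, and since every row of $O$ belongs to exactly one $\mathcal{E}_j$ and $\Gamma=I$, the Gramian splits as $O^{T}\Gamma^{-1}O=\sum_{j\in\mathcal{J}}O_j^{T}\Gamma^{-1}O_j$. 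By Theorem~\ref{thm:covariance}, and by Corollary~\ref{cor:path} when landmark-$j$ is reached by more than one path, the vehicle-$i$ diagonal entry of $(O_j^{T}\Gamma^{-1}O_j)^{-1}$ is exactly $p_{ij}$.

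Second, I would relate the covariance of the combined system to the per-landmark covariances. Writing $G_j\triangleq O_j^{T}\Gamma^{-1}O_j$, the bound $P\preceq(O^{T}\Gamma^{-1}O)^{-1}=\big(\sum_{j\in\mathcal{J}}G_j\big)^{-1}$ together with operator convexity of the matrix-inverse map (so that $\big(\sum_j G_j\big)^{-1}\preceq\sum_j G_j^{-1}$, each $G_j^{-1}$ read on the subspace spanned by the states of the vehicles that lie on a path to landmark-$j$) gives, on extracting the $(i,i)$ entry, $p_i\le\sum_{j\in\mathcal{J}}p_{ij}$, which is the expression the covariance calculator uses. Staying closer to the style of the proof of Theorem~\ref{thm:covariance}, I would alternatively verify the claim on a small worked configuration --- vehicle-$i$ joined to two landmarks, once by a direct edge and once through intermediate vehicles --- and then rerun the contradiction argument: if the vehicle-$i$ diagonal entry of the aggregated $P$ differed from $\sum_j p_{ij}$ by some term $1/\epsilon_s$, inverting would produce a Gramian incompatible with the block sum $\sum_j O_j^{T}O_j$ forced by the graph topology.

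The step I expect to be the main obstacle is making the aggregation rigorous rather than merely plausible. A single landmark with bearing-only measurements does not render a vehicle observable (cf.~\cite{sharma2014observability}), so $G_j$ is singular on the full state and $G_j^{-1}$ --- hence $p_{ij}$ --- is only meaningful on the restricted index set of vehicles actually reachable from landmark-$j$; the block-inversion bookkeeping for the column shared by these restricted blocks (the column of vehicle-$i$) is where the care is needed. The second delicate point is that $p_i=\sum_{j}p_{ij}$ holds as an equality only under the convention that landmark contributions are combined by summing the per-landmark covariance blocks --- a deliberately conservative surrogate for the exact jointly-estimated covariance, consistent with the ``$\preceq$'' already present in $P\preceq(O^{T}\Gamma^{-1}O)^{-1}$ --- and I would state that convention explicitly so that the ``$=$'' in the corollary is unambiguous.
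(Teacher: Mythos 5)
Your proposal rests on the same pillar as the paper's own proof of Corollary~\ref{cor:landmark}: additivity of the information matrix over independent measurements. The paper's argument is considerably terser --- it quotes the Fisher-information identity $I_X(\alpha)=\sum_i I_{X_i}(\alpha)$ for independent measurements and then asserts that, ``since covariance is the inverse of information,'' the per-landmark contributions may simply be added. Your block decomposition $O^{T}\Gamma^{-1}O=\sum_{j\in\mathcal{J}}O_j^{T}\Gamma^{-1}O_j$ is the matrix form of exactly that identity, so the two routes coincide up to notation, and your fallback of re-running the worked-example-plus-contradiction argument of Theorem~\ref{thm:covariance} is likewise in the spirit of the paper. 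Where you go beyond the paper is the inversion step, and your caution there is warranted: the inverse of a sum of information matrices is not the sum of the inverses, so the decomposition only yields $\bigl(\sum_{j}G_j\bigr)^{-1}\preceq\sum_{j}G_j^{-1}$ and hence $p_i\le\sum_{j\in\mathcal{J}}p_{ij}$; indeed, since each additional landmark adds information, the true covariance can only shrink, so the summed expression is a conservative over-approximation rather than the exact value. The paper passes over this silently, whereas your explicit statement that the ``$=$'' in the corollary must be read as a convention --- a deliberately pessimistic surrogate consistent with the $P\preceq(O^{T}\Gamma^{-1}O)^{-1}$ bound already in force --- is the more honest account, and the one that matters in practice since the quantity only enters the NMPC cost as a penalty. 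Your remark about the singularity of the per-landmark Gramians on the full state (so that $p_{ij}$ is only meaningful on the index set of vehicles reachable from landmark-$j$) addresses a bookkeeping issue the paper does not mention; none of this invalidates the corollary as the paper uses it, but your version is the rigorous one.
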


\begin{proof}
The method to calculate covariances for Corollary~\ref{cor:path} and \ref{cor:landmark} is given in Theorem~\ref{thm:covariance}. The summation is based on the properties of the information matrix given as follows~\cite{fisher,zegers2015fisher} 

If $X=(X_1,X_2,\ldots,X_n)$ and $X_1,X_2,\ldots,X_n$ are independent random variables, then $I_X(\alpha)=I_{X_1}(\alpha)+I_{X_2}(\alpha)+\ldots I_{X_n}(\alpha)$, where $I_x(\alpha)$ is the information matrix defined as 
\begin{equation*}
	I_x(\alpha) = E_{\alpha}\left[ \left( \frac{\partial}{\partial\alpha}\log f(X|\alpha)\right) ^2\right] = \mathrm{Var}_{\alpha}\left(  \frac{\partial}{\partial\alpha}\log f(X|\alpha)\right).
\end{equation*}
Since \[f(x|\alpha) = \prod_{i=1}^{n}f_i(x_i|\alpha),\] where $f_i(\cdot|\alpha)$ is the pdf of $X_i$,
\begin{eqnarray}
\mathrm{Var}\left[  \frac{\partial}{\partial\alpha}\log f(X|\alpha)\right] &=& \sum_{i=1}^{n}\mathrm{Var}\left[  \frac{\partial}{\partial\alpha}\log f_i(X_i|\alpha)\right], \nonumber \\
I_X(\alpha) &=& \sum_{i=1}^{n}I_{X_i}(\alpha). \nonumber 
\end{eqnarray} 

 Since covariance is the inverse of information, we can find the total covariance associated with each vehicle by adding the components from all the paths and landmarks.
\end{proof}

In the next sub-sections, we show how this information can be used to analyze the evolution of covariance in multi-vehicle-landmark systems with range and bearing measurements. 

\subsection{Range measurements}
Consider the configuration given in Fig.~\ref{fig:config2}(i). The vehicle kinematics are defined as
\begin{eqnarray}
\dot{x}_i &=& v \cos{\psi_1}, \nonumber \\
\dot{y}_i &=& v \sin{\psi_1}, 
\end{eqnarray} 
where $i=1,2$ with range measurements
\begin{eqnarray}
h_{1a} &=& \sqrt{(x_1 - x_a)^2 + (y_1 - y_a)^2}, \\
h_{12} &=& \sqrt{(x_1 - x_2)^2 + (y_1 - y_2)^2}, \\
h_{2b} &=& \sqrt{(x_2 - x_b)^2 + (y_2 - y_b)^2},
\end{eqnarray}
and we derive the observability matrix using the Lie derivatives~ \cite{sharma2011graph}. To simplify the representation we denote $x_{1a}=(x_1 - x_a),x_{12}=(x_1 - x_2) ,x_{2b}=(x_2 - x_b)$, $y_{1a}=(y_1 -y_a),y_{12}=(y_1 - y_2) ,y_{2b}=(y_2 - y_b)$. 
Define 
\begin{equation}
f_L = \begin{bmatrix}
\cos{\psi_1} 
\sin{\psi_1} 
\cos{\psi_2} 
\sin{\psi_2} 
\end{bmatrix}',
\end{equation}
and the vehicle kinematics can be represented as
\begin{equation}
\dot{X} = vf_L.
\end{equation}
The gradient of zero\textsuperscript{th} order Lie derivatives are given as
\begin{eqnarray}
H_{1a} = \begin{bmatrix}
\frac{x_{1a}}{R_{1a}} & \frac{y_{1a}}{R_{1a}} & 0 & 0 
\end{bmatrix}, 
H_{2b} = \begin{bmatrix}
& 0 & 0 & \frac{x_{2b}}{R_{2b}} & \frac{y_{2b}}{R_{2b}}  
\end{bmatrix},\nonumber\\
H_{12} =\begin{bmatrix}
\frac{x_{12}}{R_{12}} & \frac{y_{12}}{R_{12}} & \frac{-(x_{12})}{R_{12}} & \frac{-(y_{12})}{R_{12}} 
\end{bmatrix}, \nonumber
\end{eqnarray}
where $R_{(\cdot)} $ is the distance between nodes.
The gradient of first order Lie derivatives are given as

		\begin{align}
		\frac{\partial }{\partial X}\left( \frac{\partial h_{1a}}{\partial X}  \cdot f \right) &=& \begin{bmatrix}
		\frac{(y_{1a})^2 C\psi_1 -(x_{1a})(y_{1a}) S\psi_1}{R_{1a}^3} \\
		\frac{(x_{1a})^2 S\psi_1-(x_{1a})(y_{1a}) C\psi_1}{R_{1a}^3}\\ 0 \\ 0 
		\end{bmatrix},\nonumber
				\end{align}
				\begin{align}
		\frac{\partial }{\partial X}\left( \frac{\partial h_{2b}}{\partial X}  \cdot f \right) &=& \begin{bmatrix}
		0 \\ 0 \\ \frac{(y_{2b})^2 C\psi_2 + (-x_{2b})(y_{2b}) S\psi_2}{R_{2b}^3} \\
		\frac{(x_{2b})^2 S\psi_2 + (-y_{2b})(x_{2b}) C\psi_2}{R_{2b}^3} 
		\end{bmatrix},\nonumber
				\end{align}{
						\begin{align}
		\frac{\partial }{\partial X}\left( \frac{\partial h_{12}}{\partial X}  \cdot f \right) &=& 
	\begin{bmatrix}
		\frac{-2(y_{12}) S(\frac{\Delta\psi^-}{2})((x_{12}) C(\frac{\Delta\psi^+}{2}) + (y_{12}) S(\frac{\Delta\psi^+}{2}))}{R_{12}^3} \\
		\frac{2(x_{12}) S(\frac{\Delta\psi^-}{2})((x_{12}) C(\frac{\Delta\psi^+}{2}) + (y_{12}) S(\frac{\Delta\psi^+}{2}))}{R_{12}^3} \\
		\frac{2(y_{12}) S(\frac{\Delta\psi^-}{2})((x_{12}) C(\frac{\Delta\psi^+}{2}) + (y_{12}) S(\frac{\Delta\psi^+}{2}))}{R_{12}^3} \\
		\frac{-2(x_{12}) S(\frac{\Delta\psi^-}{2})((x_{12}) C(\frac{\Delta\psi^+}{2}) + (y_{12}) S(\frac{\Delta\psi^+}{2}))}{R_{12}^3} \end{bmatrix}\nonumber,
		\end{align}}
where $ \sin $ and $ \cos $ are abbreviated as $S$ and $C$, $ \Delta\psi^- = \psi_1-\psi_2 $, and $ \Delta\psi^+ = \psi_1+\psi_2 $.

Observability matrix is formed by using Lie derivatives up to first order as
\begin{equation}
O = \begin{bmatrix}
\nabla L^0 \\
\nabla L^1
\end{bmatrix},
\end{equation} 
where $L^0$ and $L^1$ are the zero\textsuperscript{th} and first order Lie derivatives respectively. The covariance matrix $P$ is found by inverting $ O^TO $ with the assumption of $\Gamma=I$.

The standard deviation in $x$ direction for the first vehicle can be found by taking the square root of the first element of the $P$ matrix,
\begin{equation}
\sigma_{x_1} = \sqrt{P(1,1)}.
\end{equation} 
Similarly, the standard deviation in the $y$ direction and the combined position uncertainty can be found as
\begin{eqnarray}
\sigma_{y_1} &=& \sqrt{P(2,2)},\\
\sigma_{p_1} &=& \sqrt{\sigma_{x_1}^2 + \sigma_{y_1}^2}.
\end{eqnarray}

Since the derived $P$ matrix is very large with several terms, we consider some simplifying assumptions to formulate an approximate relation. All the distance terms were substituted with a single average value. The resulting relation is given as
\begin{equation}
\sigma_{p_1} = \sqrt{\frac{2}{3}+R_g^2 \csc^2{(\psi_1-\theta_g)}},
\label{eq:s_range}
\end{equation}
where $R_g,\theta_g$ are the average value of distance and LOS angle. It is evident from the relation that the covariance of vehicle depends on the distances from the landmarks and the LOS angles to them. As the distance increases, the covariance also increases. Similarly, $ \sigma_p $ of other vehicles can also be found. 
\subsection{Bearing measurements}\label{sec:b_measurement}
Let's extend the vehicle model to contain three states. The new model is given as
\begin{eqnarray}
\dot{x}_i &=& v \cos{\psi_i}, \nonumber\\
\dot{y}_1 &=& v \sin{\psi_i}, \nonumber\\
\dot{\psi}_i &=& \omega_i, \nonumber 
\end{eqnarray}
where vehicle $i=1,2$. With the same configuration as in Fig.~\ref{fig:config2}(i), define bearing measurement equations as
\begin{eqnarray}
h_{a1} &=& \arctan\left( \frac{y_a-y_1}{x_a-x_1}\right) - \psi_1, \\
h_{b2} &=& \arctan\left( \frac{y_b-y_2}{x_b-x_2}\right) - \psi_2, \\
h_{12} &=& \arctan\left( \frac{y_1-y_2}{x_1-x_2}\right) - \psi_2, \\
h_{21} &=& \arctan\left( \frac{y_2-y_1}{x_2-x_1}\right) - \psi_1, 
\end{eqnarray}
the gradient of zeroth order Lie derivatives are given as
\begin{eqnarray}
H_{a1} &=& \begin{bmatrix}
\frac{-(y_1 - y_a)}{R_{a1}^2} & \frac{x_1 - x_a}{R_{a1}^2} & -1 & 0 & 0 & 0 
\end{bmatrix}, \\
H_{b2} &=& \begin{bmatrix}
0&0&0&\frac{-(y_2 - y_b)}{R_{b2}^2} & \frac{x_2 - x_b}{R_{b2}^2} & -1  
\end{bmatrix}, \\
H_{12} &=& \begin{bmatrix}
\frac{y_2 - y_1}{R_{12}^2} & \frac{x_1 - x_2}{R_{12}^2} & 0 & \frac{y_1 - y_2}{R_{12}^2} & \frac{x_2 - x_1}{R_{12}^2} & -1 
\end{bmatrix}, \hspace{6mm} \\
H_{21} &=& \begin{bmatrix}
\frac{y_2 - y_1}{R_{21}^2} & \frac{x_1 - x_2}{R_{21}^2} & -1 & \frac{y_1 - y_2}{R_{21}^2} & \frac{x_2 - x_1}{R_{21}^2} & 0 
\end{bmatrix}, \hspace{6mm}
\end{eqnarray} 
using the geometry, the equations are changed to make it in terms of LOS angles as follows
\begin{eqnarray}
H_{a1} &=& \begin{bmatrix}
\frac{\sin\theta_{a1}}{R_{a1}} & \frac{-\cos\theta_{a1}}{R_{a1}} & -1 & 0 & 0 & 0 
\end{bmatrix}, \\
H_{b2} &=& \begin{bmatrix}
0&0&0&\frac{\sin\theta_{b2}}{R_{b2}} & \frac{-\cos\theta_{b2}}{R_{b2}} & -1 
\end{bmatrix}, \\
H_{12} &=& \begin{bmatrix}
\frac{-\sin\theta_{12}}{R_{12}} & \frac{\cos\theta_{12}}{R_{12}} & 0 & \frac{\sin\theta_{12}}{R_{12}} & \frac{-\cos\theta_{12}}{R_{12}} & -1  
\end{bmatrix}, \hspace{6mm} \\
H_{21} &=& \begin{bmatrix}
\frac{\sin\theta_{21}}{R_{21}} & \frac{-\cos\theta_{21}}{R_{21}} & -1 & \frac{-\sin\theta_{21}}{R_{21}} & \frac{\cos\theta_{21}}{R_{21}} & 0  
\end{bmatrix}. \hspace{6mm}
\end{eqnarray} 
Define 
\begin{eqnarray}
f_v &=& \begin{bmatrix}
\cos{\psi_1}~ \sin{\psi_1} ~0~ \cos{\psi_2} ~\sin{\psi_2} ~0 
\end{bmatrix}',\nonumber\\
f_{\omega_1} &=& \begin{bmatrix}
0 \quad
0 \quad
1\quad
0 \quad
0\quad
0 
\end{bmatrix}',\nonumber\\
f_{\omega_2} &=& \begin{bmatrix}
0 \quad
0 \quad
0\quad
0 \quad
0\quad
1 
\end{bmatrix}',\nonumber
\end{eqnarray}
and the dynamics can be represented as
\begin{equation}
\dot{X} = \begin{bmatrix}
\dot{X}_1 \\
\dot{X}_2
\end{bmatrix}
= f_vv+f_{\omega_1}\omega_1 + f_{\omega_2}\omega_2,
\end{equation}
and a similar procedure to the range measurement case is followed to find the first order Lie derivatives, observability grammian and covariance matrix $P$. The covariance in position of the first vehicle is given by
\begin{eqnarray}
\sigma_{p_1}^2 = \frac{9}{2}R_g^2\left(1+\frac{R_g^2}{2+R_g^2+2\cos(2(\psi_1-\theta_g))} \right)+\nonumber\\ (R_g^2+R_g^4)\csc^2(\psi_1-\theta_g). \nonumber
\label{eq:s_bearing}
\end{eqnarray} 
Similarly, $ \sigma_p $ of other vehicles can also be found. This approximate closed form covariance is used in the NMPC as Step 2 in Fig. \ref{fig:block}(a).

Next section presents the complete NMPC formulation combining the MHE scheme given in Sec.~\ref{sec:MHE} and the uncertainty results derived from analysis given in Sec.~\ref{sec:analysis}. 
	\section{NMPC formulation}\label{sec:NMPC}
NMPC is a state-of-the-art technique for real-time optimal control. At each time step, the constrained optimization problem is solved based on the plant model for a finite time horizon, and the procedure is repeated with states updated through feedback in the next iteration~\cite{MPCbook}. Fig.~\ref{fig:block} shows the block diagram of the NMPC scheme used in this paper. The optimal control sequence is computed for the prediction horizon $ \tau_h $ from which only the first action is applied to the system at each time step. 
	 
	
 A point mass kinematic model is considered for the vehicles. We assume that the altitude and velocities of the AAVs remain constant during transit. The general kinematic model is given as
	\begin{equation}
	\dot{X} =
	\begin{bmatrix}
	v\cos \psi_1   \\
	v\sin \psi_1\\
	\omega_1 \\
	\vdots \\
	v\cos \psi_{n_v}   \\
	v\sin \psi_{n_v}\\
	\omega_{n_v} \\        
	\end{bmatrix},
	\end{equation}  
 where  $ v $ is the linear velocity, $ \psi $ the heading angle, $ \omega $ the angular velocity, and $ n_v $ is the number of vehicles.
	
The objective function for the NMPC is defined as
	\begin{equation}
	\min_{\boldsymbol{\omega_1 \cdots \omega_{n_v}} \in \mathcal{PC}(t,t+\tau_h)}J=\int_{t}^{t+\tau_h}\sum_{i=1}^{n_v} \left[ C_{1_i}+W_iC_{2_i}\right], 
	\label{eq:PI}
	\end{equation}	
	subject to:
	\begin{align*}
	\dot{X} &= f(X,\boldsymbol{\omega}),\\
	\boldsymbol{\omega} & \in\left[ \boldsymbol{\omega}^{-},\boldsymbol{\omega}^{+}\right], 
	\end{align*}
	where 
	\begin{equation}
	C_{1_i}=(x_i-x_{D_i})^{2}+(y_i-y_{D_i})^{2}, \label{eq:C1}
	\end{equation}
is the cost associated with minimizing the distance between the vehicle and the destination. $(x_i,y_i) $ is the position of the $ i^{th} $ vehicle and $ (x_{D_i},y_{D_i}) $ are their respective destination points. $ \boldsymbol{\omega}^{-} $ and $ \boldsymbol{\omega}^{+}$ are the lower and upper bounds of $\boldsymbol{\omega}$, and $ \mathcal{PC}(t,t+\tau_h) $ denotes the space of piece-wise continuous function defined over the time interval $ \left[ t,t+\tau_h\right]  $. 
$C_{2_i}$ is the cost to ensure the estimation covariance is within a  bound. It is defined as:
	\begin{equation}
	C_{2_i}=\begin{cases}
	0, & \text{if $\lambda_i \geq \eta$}.\\
	(\eta-\lambda_i)^{2}, & \text{ otherwise}.
	\end{cases} \label{eq:C2}
	\end{equation} 
where $ \eta $ is a tuning parameter related to the number of connections required. Increasing $ \eta $ will result in vehicles moving closer to the landmarks and increase the connections. For satisfactory localization, observability conditions should be satisfied, which require connections with at least two landmarks~\cite{sharma2014observability}. Hence, the value of $\eta$ should be selected as~$ \eta \geq 2 $. The parameter $\lambda_i $ is the second smallest eigenvalue of the Laplacian matrix which is formed as:
	\begin{equation}
	L_i(X)=\Delta_i(X)-A_i(X),
	\end{equation} 
where $ A_i(X) $ is the adjacency matrix defined similar to~\cite{de2006decentralized} as
	\begin{equation}
	A_{i_{mn}}=\begin{cases}
	e^{\frac{-\kappa(||R_{mn}||-\rho)}{R_{s}-\rho}}, & \text{$\left| \left| R_{mn}\right| \right| \leq R_s$}.\\
	0, & \text{$\left| \left| R_{mn}\right| \right| > R_s$}.
	\end{cases}
	\end{equation} 
	and
	$ \Delta_i(X) $ is a diagonal matrix with elements
	\begin{equation}
	\Delta_{i_{mm}}=\sum_{n=1}^{N}A_{i_{mn}}
	\end{equation}
where $ m,n= 1  $ to $ N $, and $ N $ is the number of nodes of the graph connecting the vehicles and landmarks. $ \kappa $ is a constant which determines the convergence rate of the exponential function, and $\rho$ is used to set a minimum distance between the landmarks and the vehicles to avoid collisions. $ ||R_{mn}|| $ is the distance between the $ m^{th} $ and $ n^{th} $ nodes, and $ R_s $ is the sensor range of the vehicles. This formulation of the adjacency matrices, rather than updating it with binary values, helps drive the vehicles closer to the landmarks than just maintaining the connections by keeping them in the sensor range and reducing the distance between the landmarks and the vehicles help in decreasing the estimation covariance as explained in sec.~\ref{sec:analysis}. The maximum value $ \lambda_i $ can take is equal to the number of connections of each vehicle, and this insight is used in formulating (\ref{eq:C2}).   
	
The weight $W_i$ associated with $ C_{2_i} $ is defined in the following way
	\begin{equation}
	W_i=\begin{cases}
	W, & \text{if $3\sigma_{p_i} \geq \sigma_c$}.\\
	0, & \text{ otherwise}.
	\end{cases}\label{eq:Wi}
	\end{equation} 
where $\sigma_{p_i}$ is the standard deviation in the estimated position of the $ i^{th} $ vehicle and the constant $\sigma_c$ is the specified critical value. This adaptive weight formulation is used to obtain a trade-off between the two objectives. $\sigma_{p_i} $s are calculated using the expression
\begin{eqnarray}
\sigma_{p_i}^2 = \frac{9}{2}R_g^2\left(1+\frac{R_g^2}{2+R_g^2+2\cos(2(\psi_i-\theta_g))} \right)+\nonumber\\(R_g^2+R_g^4)\csc^2(\psi_i-\theta_g), \nonumber
\label{eq:s_bearing_i}
\end{eqnarray} 
which is explained in detail in section \ref{sec:b_measurement}. The terms $C_1$ and $ C_2 $ of the objective function (\ref{eq:PI}) is normalized as follows:
	\begin{eqnarray}
	C_1(t)&=&\left(\frac{C_1(t) - \min(C_1(t))}{\max(C_1(t)) - \min(C_1(t))}\right),\label{eq:c1t}\\
	C_2(t)&=&\left(\frac{C_2(t) - \min(C_2(t))}{\max(C_2(t)) - \min(C_2(t))}\right).\label{eq:c2t}
	\end{eqnarray}
The NMPC objective function given equation \eqref{eq:PI} uses the expressions given in equations \eqref{eq:C1}\eqref{eq:C2}\eqref{eq:Wi}\eqref{eq:c1t} and equation \eqref{eq:c2t}. The NMPC objective function is solved along with state and control constraints. 
	
	\section{Results and Discussion}\label{sec:results}
Extensive numerical simulations were carried out for validating the proposed scheme using CasADi-Python~\cite{Andersson2019}. We use the total path length  and average estimation error as metric for analyzing the performance of the proposed approach. We perform the following analysis (i) effect of horizon length in the NMPC on the path length and estimation error (ii) effect of cooperation  (iii) comparison with the approach proposed in \cite{manoharan2019nonlinear} and (iv) effect of increasing the number of vehicles in the region to 10. Before presenting the analysis, we will describe the simulation setting.

\subsection{Simulation setup}
We consider an environment of 200m $\times$ 200m, where 20 landmarks are randomly placed. Each vehicle starts at a given location and has a desired goal location. The vehicles have a constant velocity of 5\,m/s.
The angular velocities of the agents are constrained by $[-\pi/2,\pi/2]$\,rad/s due to the practical considerations on the turn rate of the agents. The value of $\eta$ is selected as 2, and the values of $k$ and $\rho$ are selected as 5 and 0.5, respectively. The weight $W$ is selected as 10000, sensor range of the vehicles, $R_s$ = 50\,m, and $\sigma_c$ = 3\,m.  The measurement noise covariance matrix $\Gamma$ is selected as a $n_\Gamma \times n_\Gamma$ matrix with 0.01 in its diagonals, where $n_\Gamma$ is the number of received measurements. Each time step is $0.1$s for all the simulations. The simulations were carried out on a Ubuntu 18.04, Intel i9 workstation with 64GB RAM. 
\subsection{Effect of NMPC horizon length}
In NMPC, the horizon length plays a key role between path optimality and computational time. The larger the horizon, better the path obtained at the cost of increased computational time. This effect can be seen in Fig. \ref{fig:h10}, where the computation time for prediction horizon of $\tau_h=1$\,s is 0.056\,s, however, the paths are not optimal. With increase in $\tau_h$ to 15s, there is significant improvement in the path of vehicle 1 at increased computational time of 0.42s per iteration. With further increase in horizon length to $\tau_h=40$\,s, the average time to compute an iteration is 3.43\,s, but the obtained path length for the vehicles  is near-optimal. The path length for $\tau_h=25s$ is close to that obtained with $\tau_h=40$ but takes only 1.2s.

Further, we conducted Monte-Carlo simulation to see the effect of placement of landmarks on the vehicle paths. Figure \ref{fig:monte} shows the effect of change in landmark placement for different $\tau_h$. From the Fig.\ref{fig:monte}(a), we can see that the computation time for $\tau_h=1$s is very less but the vehicle which reaches the destination at the last on average is also high as shown in \ref{fig:monte}(b). Note that, on average, the times taken by the last agent reaching its goal for $\tau_h=$25s and $\tau_h=$40s are almost similar, however, $\tau_25$ takes far less time. 
Hence we consider $\tau_h=25$s for the  rest of the simulations
\begin{figure*}
	\centering 
	\begin{subfigure}{4.25cm}
		\includegraphics[width=4.25cm]{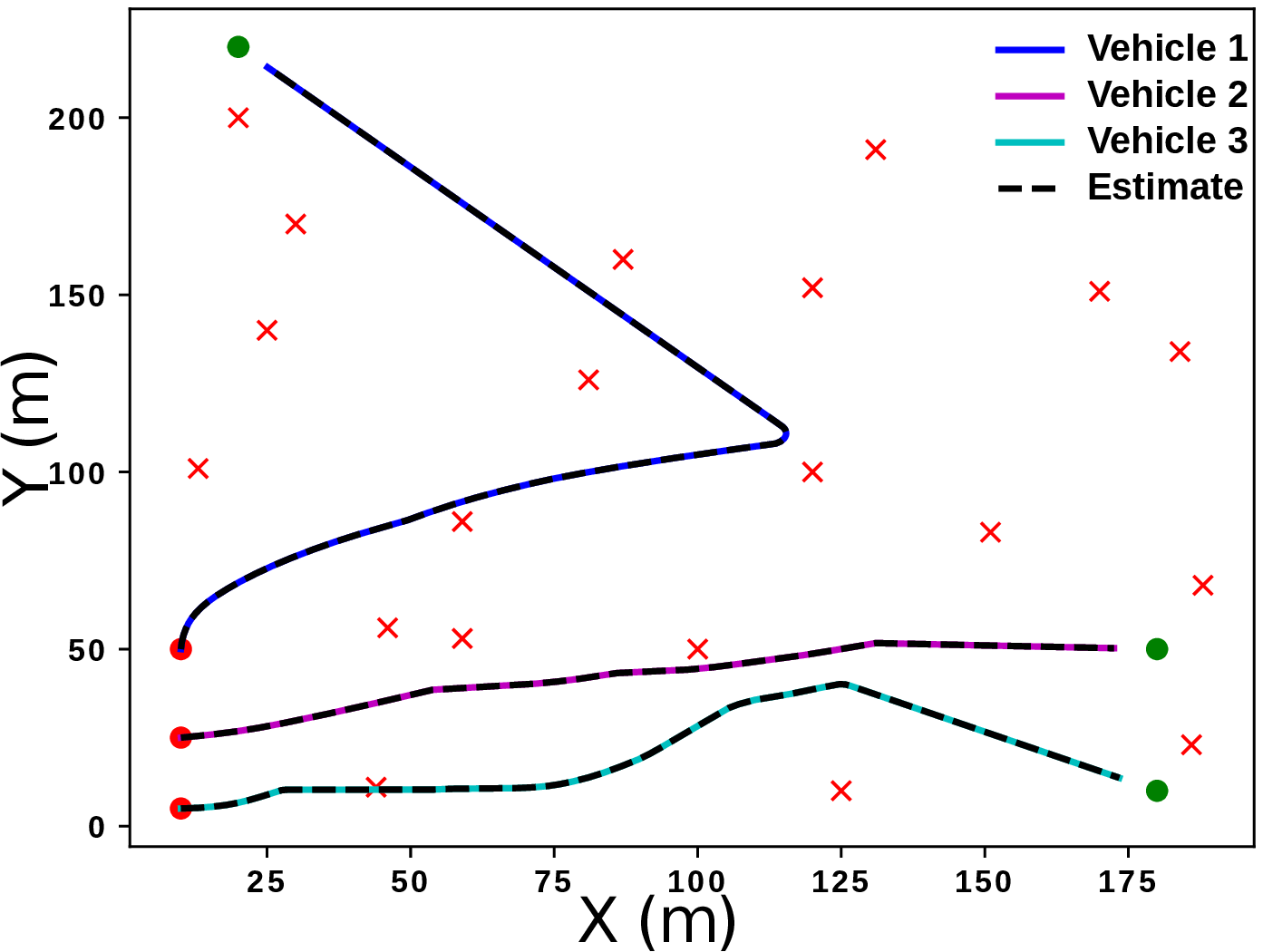}
		\caption{$\tau_h=1$\,s}
		\label{fig:h10}
	\end{subfigure}
	\begin{subfigure}{4.25cm}
		\includegraphics[width=4.5cm]{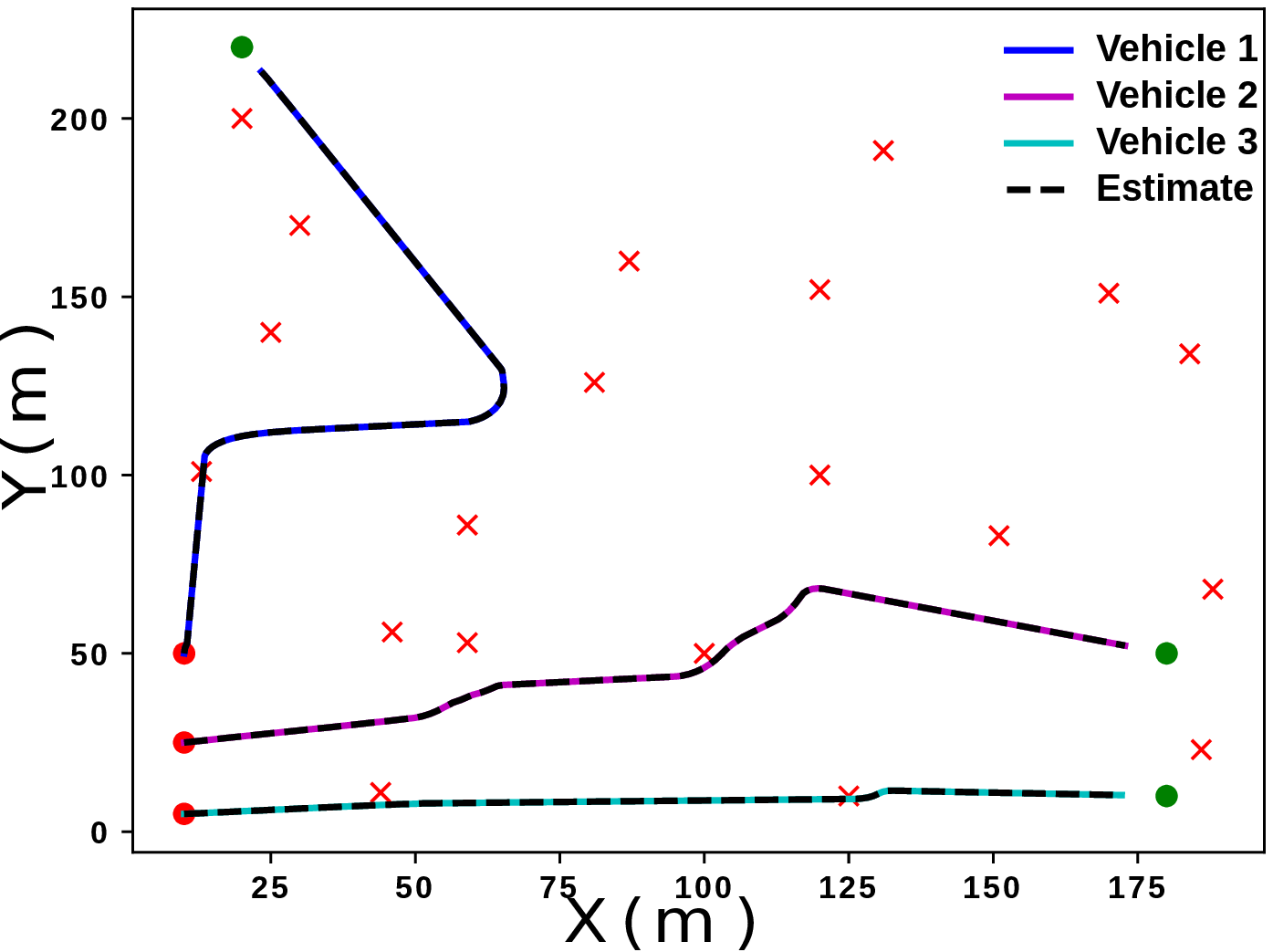}
		\caption{$\tau_h=15$\,s}
		\label{fig:h150}
	\end{subfigure}\hspace{2mm}
	\begin{subfigure}{4.25cm}
		\includegraphics[ width=4.25cm]{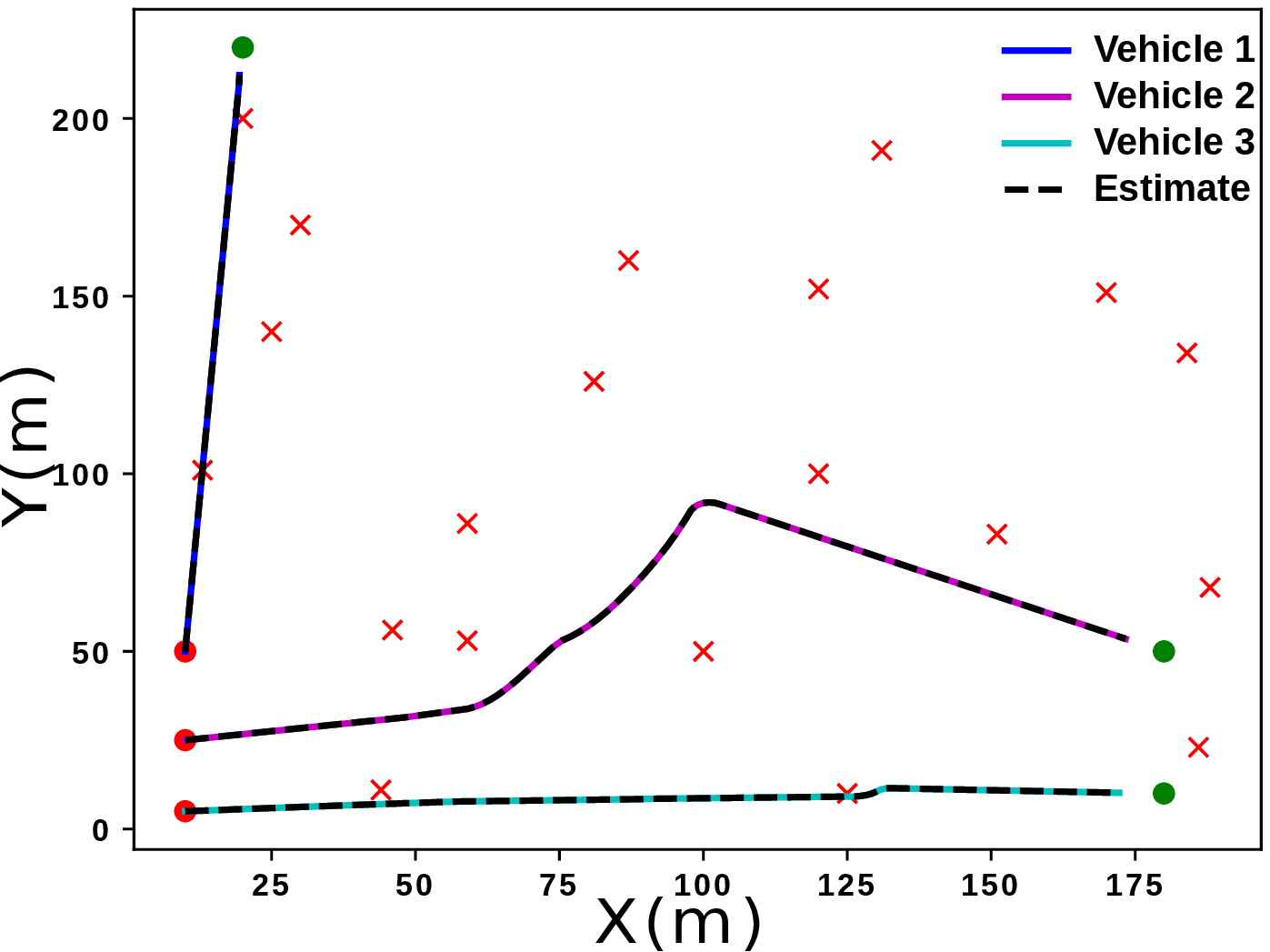}
		\caption{$\tau_h=25$\,s}
		\label{fig:h250}
	\end{subfigure}
	\begin{subfigure}{4.25cm}
		\includegraphics[width=4.25cm]{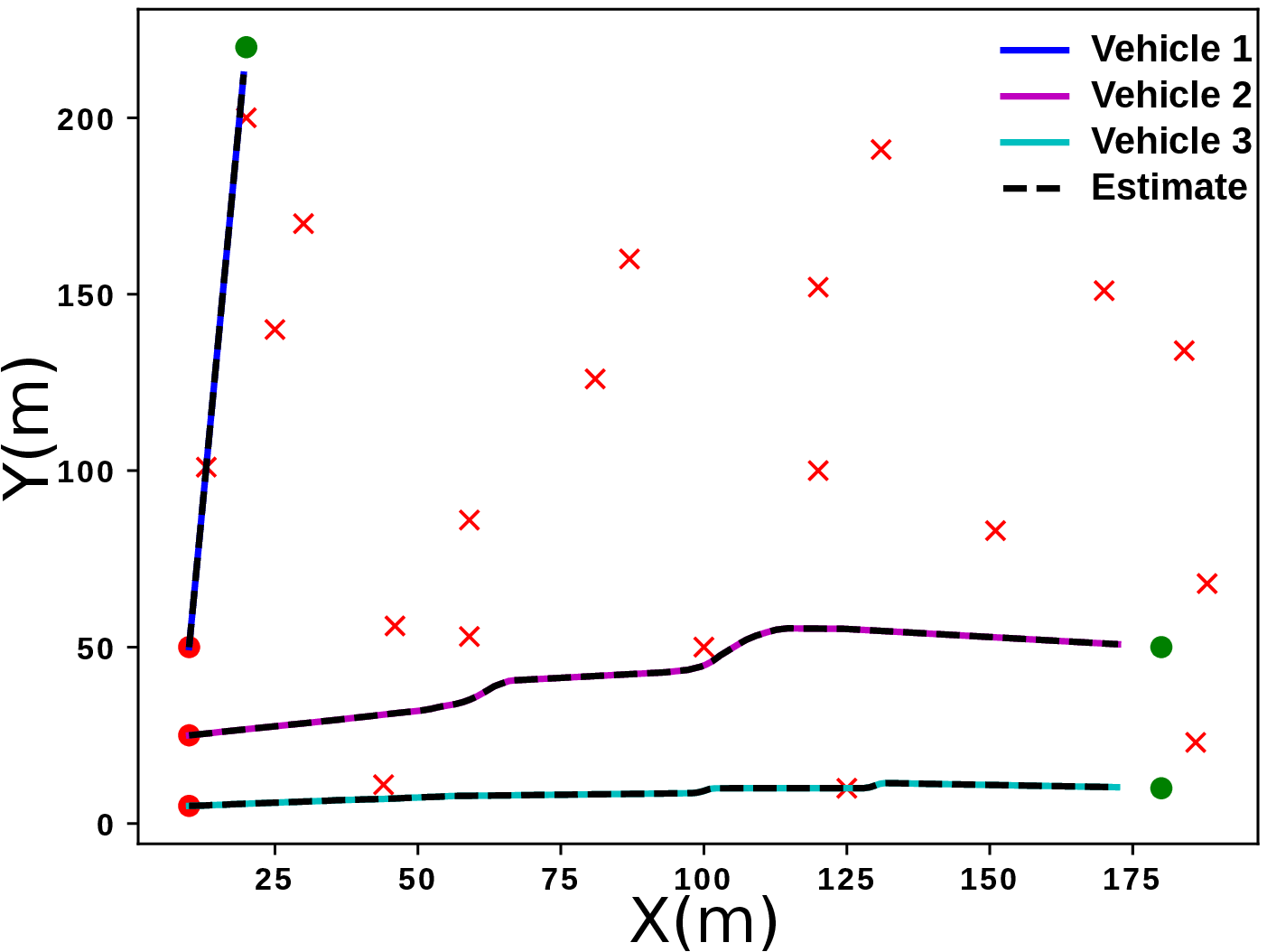}
		\caption{$\tau_h=40$\,s}
		\label{fig:h400}
	\end{subfigure}

	\caption{The average computational time taken per iteration for different $\tau_h$. (a) 0.05\,s (b) 0.42\,s (c) 1.21\,s (d) 3.43\,s.}
	\label{fig:sim_horizon}
\end{figure*}

\begin{figure}
	\centering
	\begin{subfigure}{4.25cm}
	\includegraphics[width=4.4cm,height=2cm]{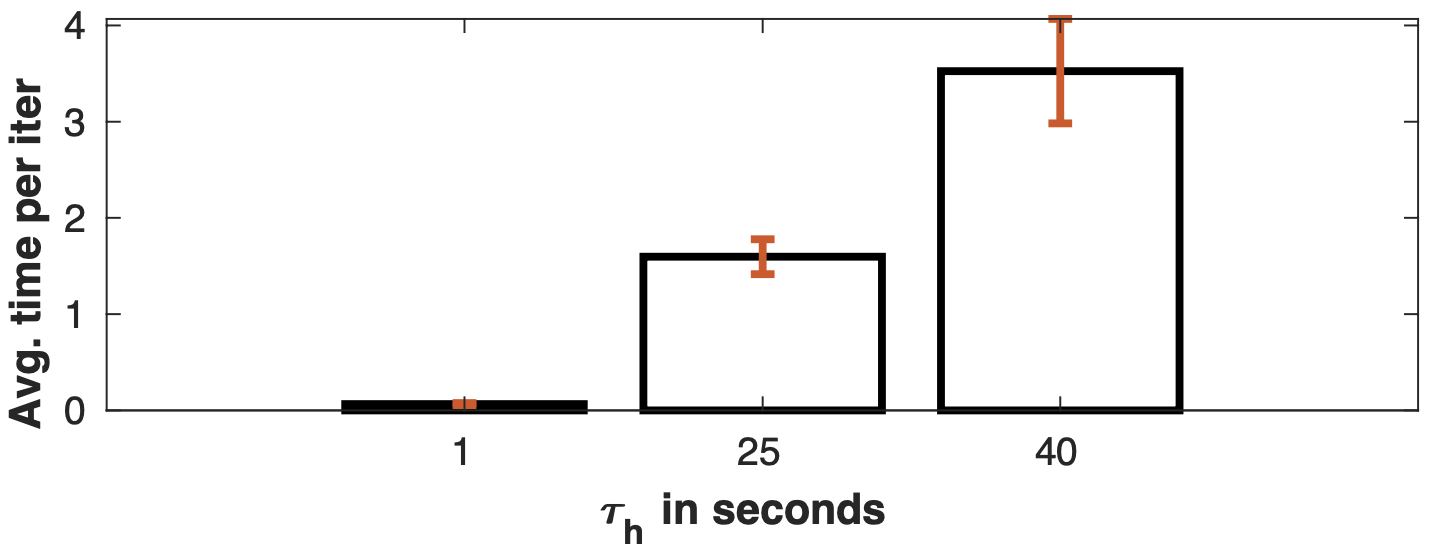}
	\caption{}	
	\end{subfigure}\hspace{2mm}
	\begin{subfigure}{4.25cm}
	\includegraphics[width=4.4cm,height=2cm]{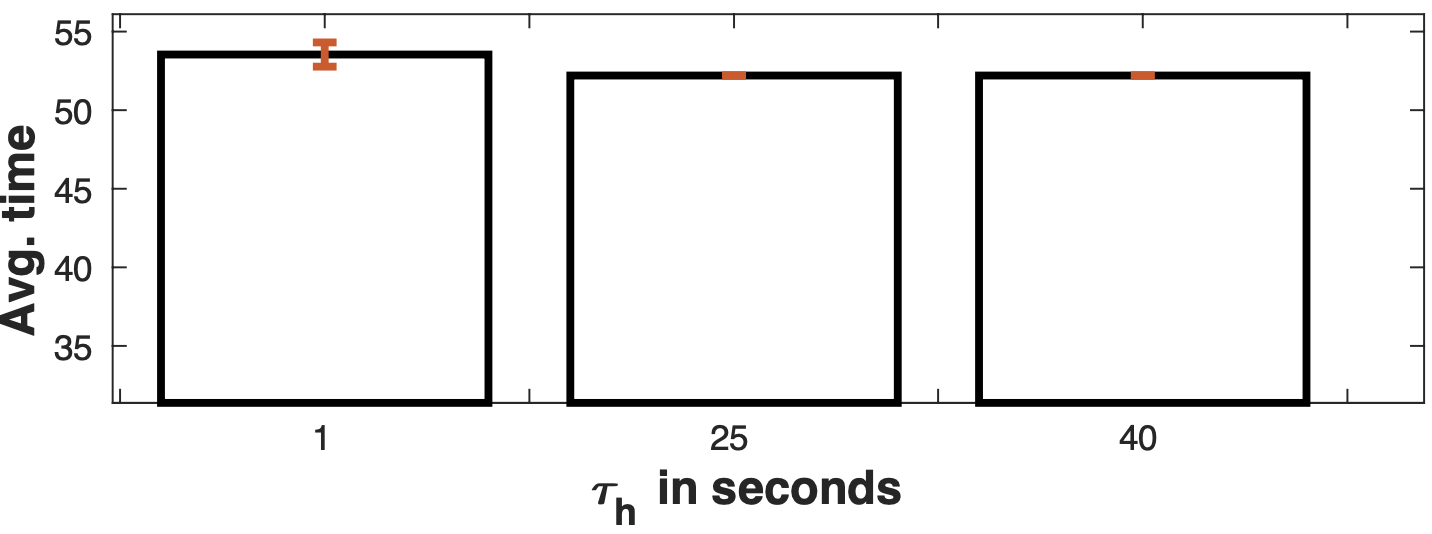}
	\caption{}	
	\end{subfigure}
	\caption{Monte-Carlo simulation for prediction horizon $\tau_h=1,25,40$. (a) Average computational time per iteration (b) Average time taken by all the agents to reach their destinations.} \label{fig:monte}
\end{figure}

\subsection{Effect of cooperation}
One of the main contributions of this paper is to show that with cooperation, the vehicles can jointly determine minimal distance paths to their goal locations while meeting localization accuracy. To show the effect of cooperation, we consider a specific scenario as shown in Fig.~\ref{fig:multi_coop}, where the landmarks are located at the top of the scenario. 
Two simulations were carried out with five vehicles, in which one scenario involved cooperative vehicles and the other without cooperation. We consider  $\tau_h$ = 25\,s, and $R_s=$ 30\,m. The trajectories taken by the vehicles in both the cases are shown in Fig.~\ref{fig:multi_coop}. It can be seen from Fig.~\ref{fig:multi_coop}(a) that the vehicles with cooperation determined a shorter path since they used adjacent vehicles for localization, whereas the non-cooperative vehicles took a longer path to go near the landmarks for localization, as shown in Fig.~\ref{fig:multi_coop}(b). The estimation errors are a bit high for the cooperative case due to the mutual localization of the vehicles.  

\begin{figure}
	\centering 
	\includegraphics[width=4.3cm]{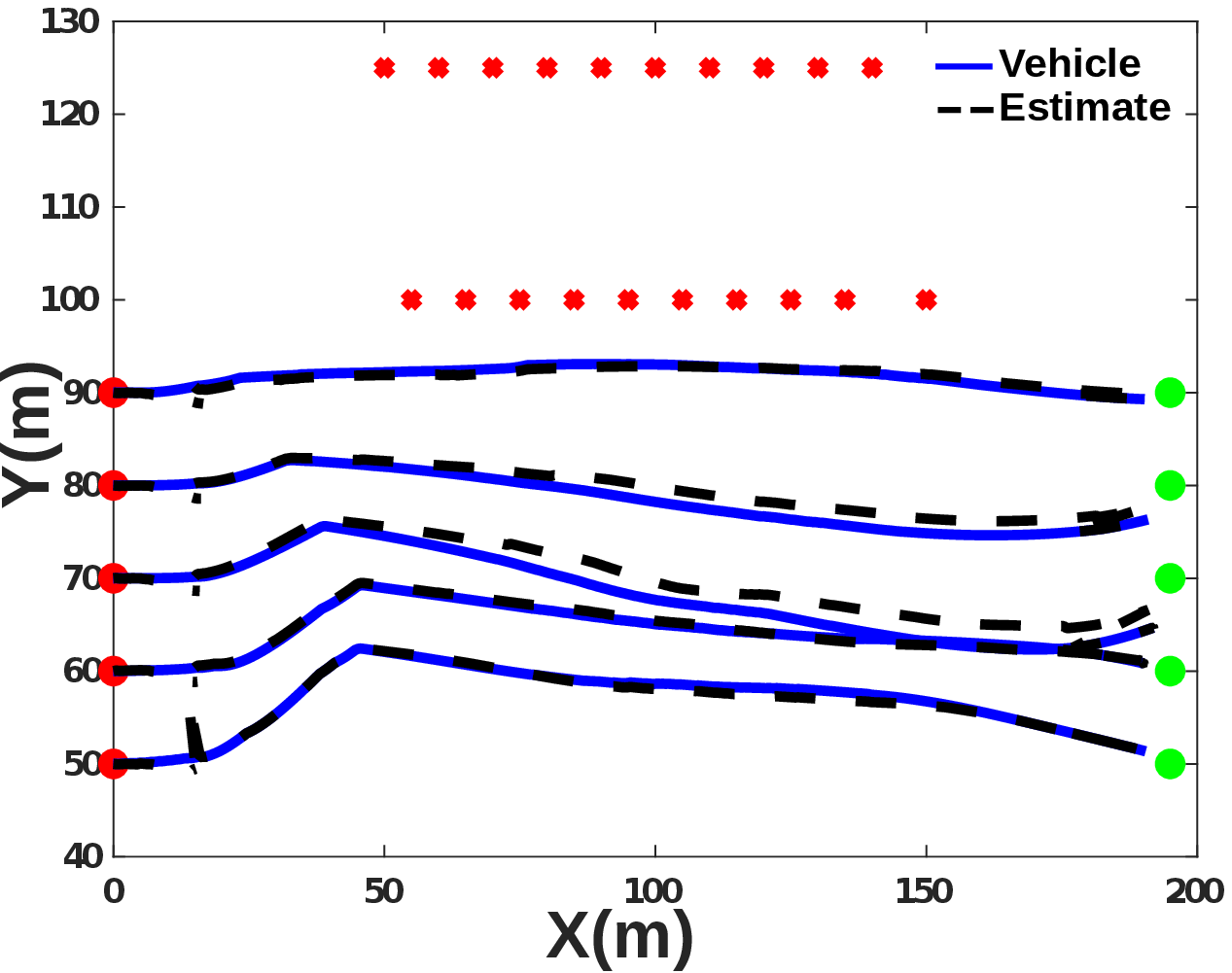}
	\includegraphics[width=4.3cm]{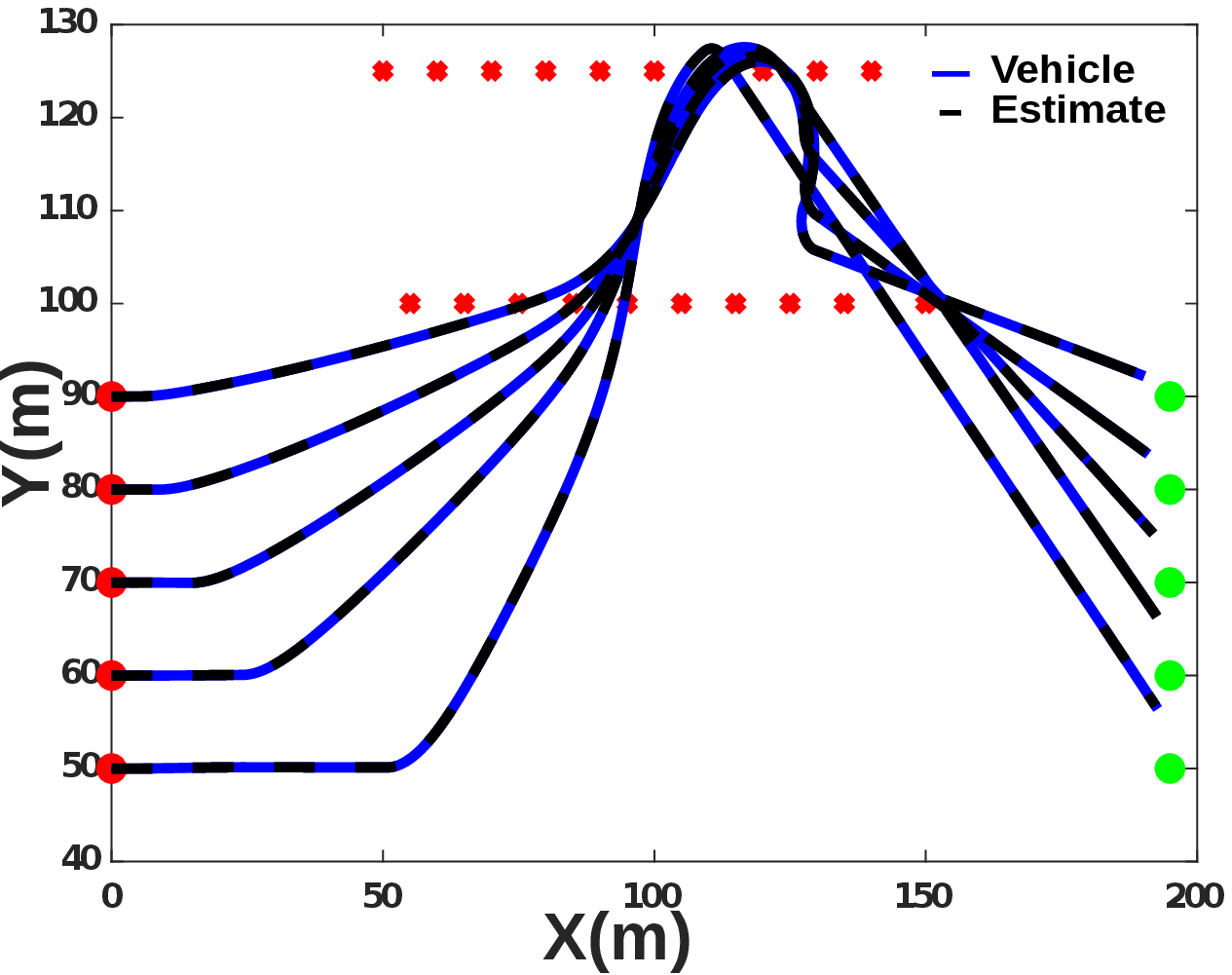}
	\caption{Effect of cooperation. (a) Trajectory of vehicles with cooperation. (b) Trajectory of vehicles without cooperation.}
	\label{fig:multi_coop}
\end{figure}

 \subsection{MHE vs EKF}
 The performance of the MHE estimator was compared against a standard EKF formulation to validate the superiority of the proposed scheme. In Fig. \ref{fig:block}, the step 1 sub-block of the MHE is replaced by the EKF. The horizon length for the MHE is selected as $ N_e=20 $, and all other initial parameters are taken the same for the MHE and the EKF. The estimated trajectories of the vehicles, true trajectories, and the errors in position for both the MHE and the EKF are given in Fig.~\ref{fig:EKF_MHE}. It can be seen that the estimation errors are less for the MHE, and it is also more stable compared to the EKF. The vehicles' mean square error (MSE) for the MHE is 0.46\,m and that with the EKF is 0.79\,m. The computation time requirement for the MHE is 1.21\,s, while the EKF takes 1.10\,s. Although the MHE takes 9\% more computational time than the EKF, its accuracy is 72\% better than the EKF. 

\begin{figure*}
	\centering 
	\begin{subfigure}{3.5cm}
		\includegraphics[width=3.5cm]{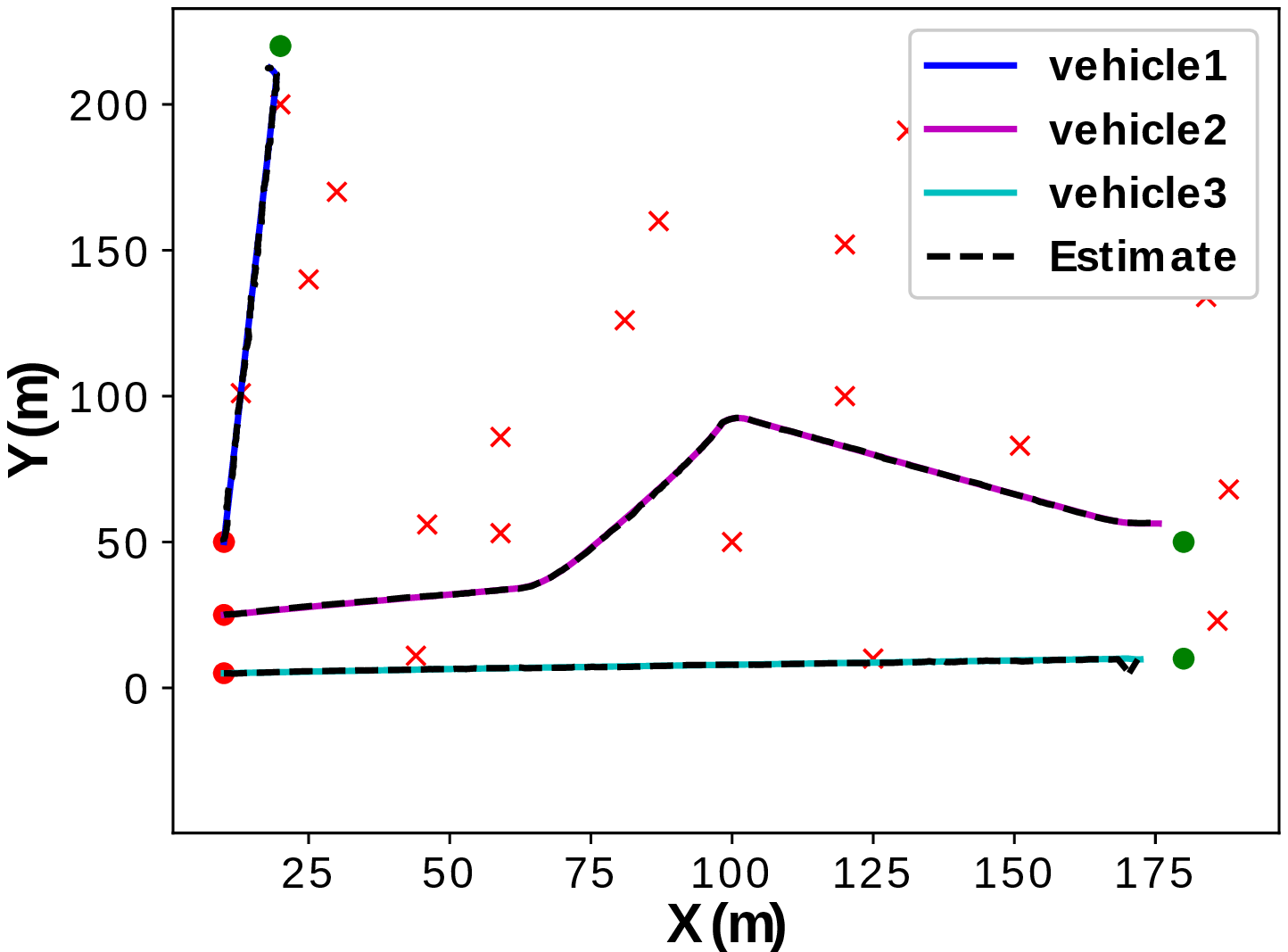}
		\caption{}
		\label{fig:ekf_pc}
	\end{subfigure}
	\begin{subfigure}{3.5cm}
		\includegraphics[width=3.5cm]{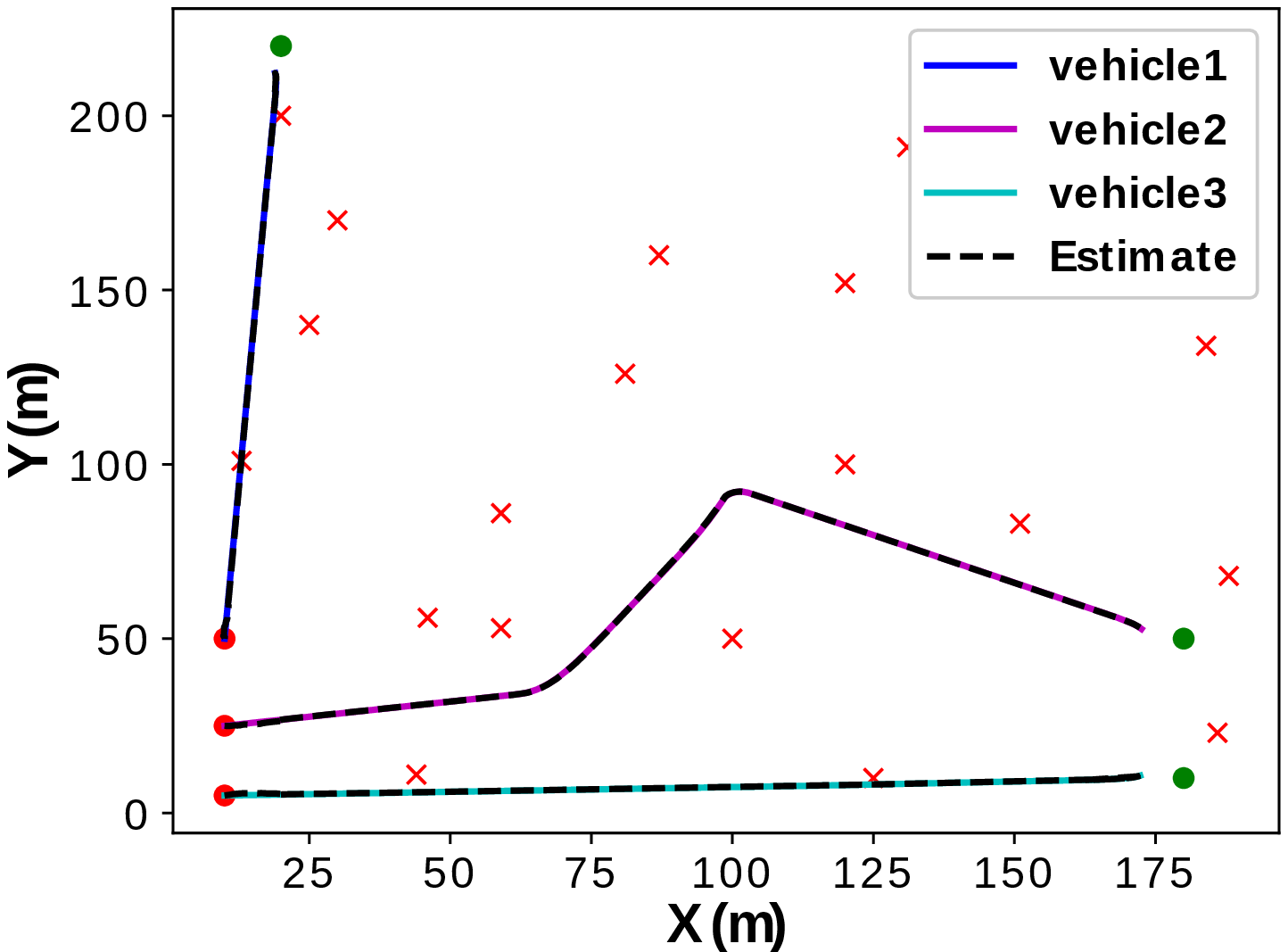}
		\caption{}
		\label{fig:mhe_pc}
	\end{subfigure}
	\begin{subfigure}{3.5cm}
		\includegraphics[width=3.5cm]{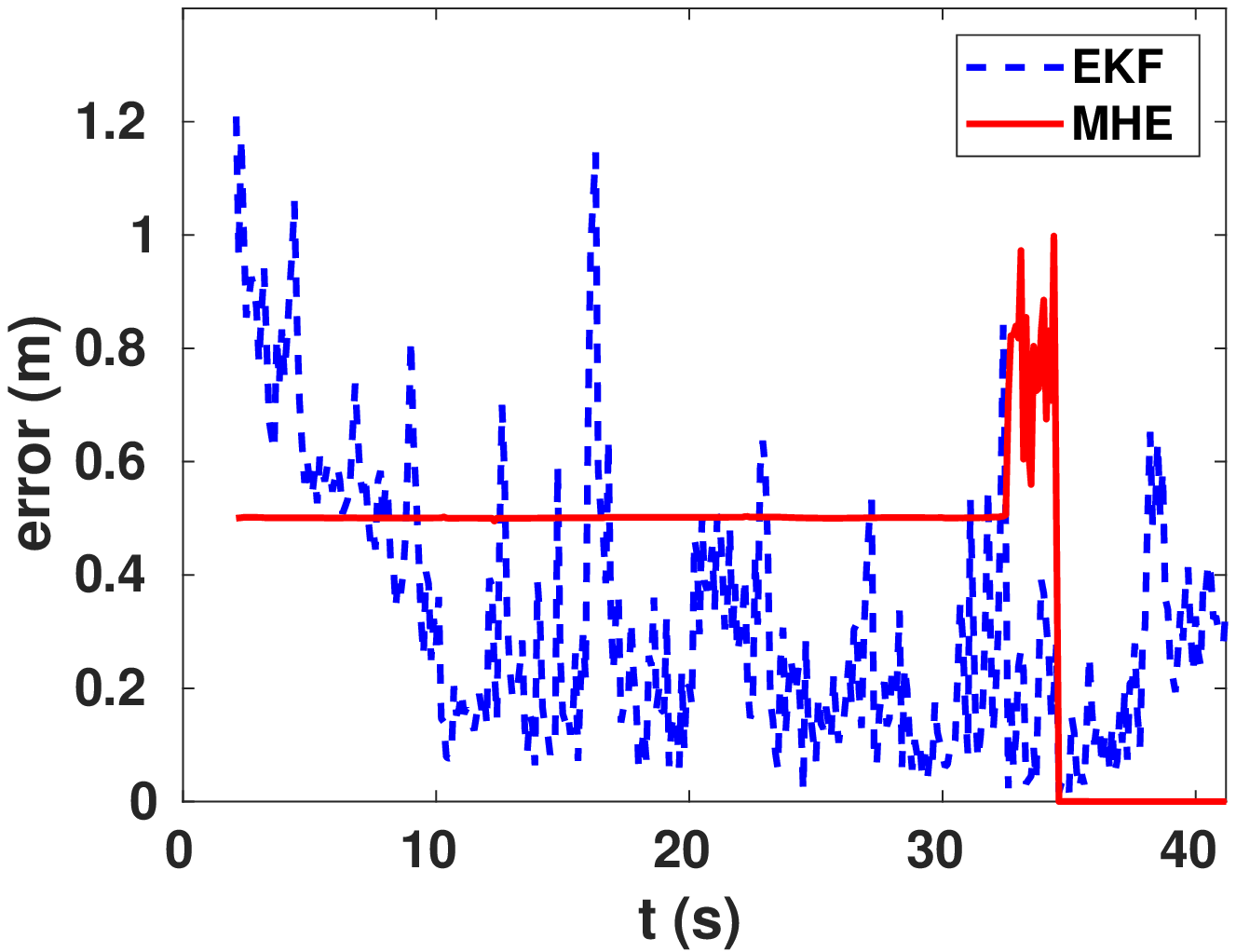}
		\caption{}
		\label{fig:ekfp1}
	\end{subfigure}
	\begin{subfigure}{3.5cm}
		\includegraphics[width=3.5cm]{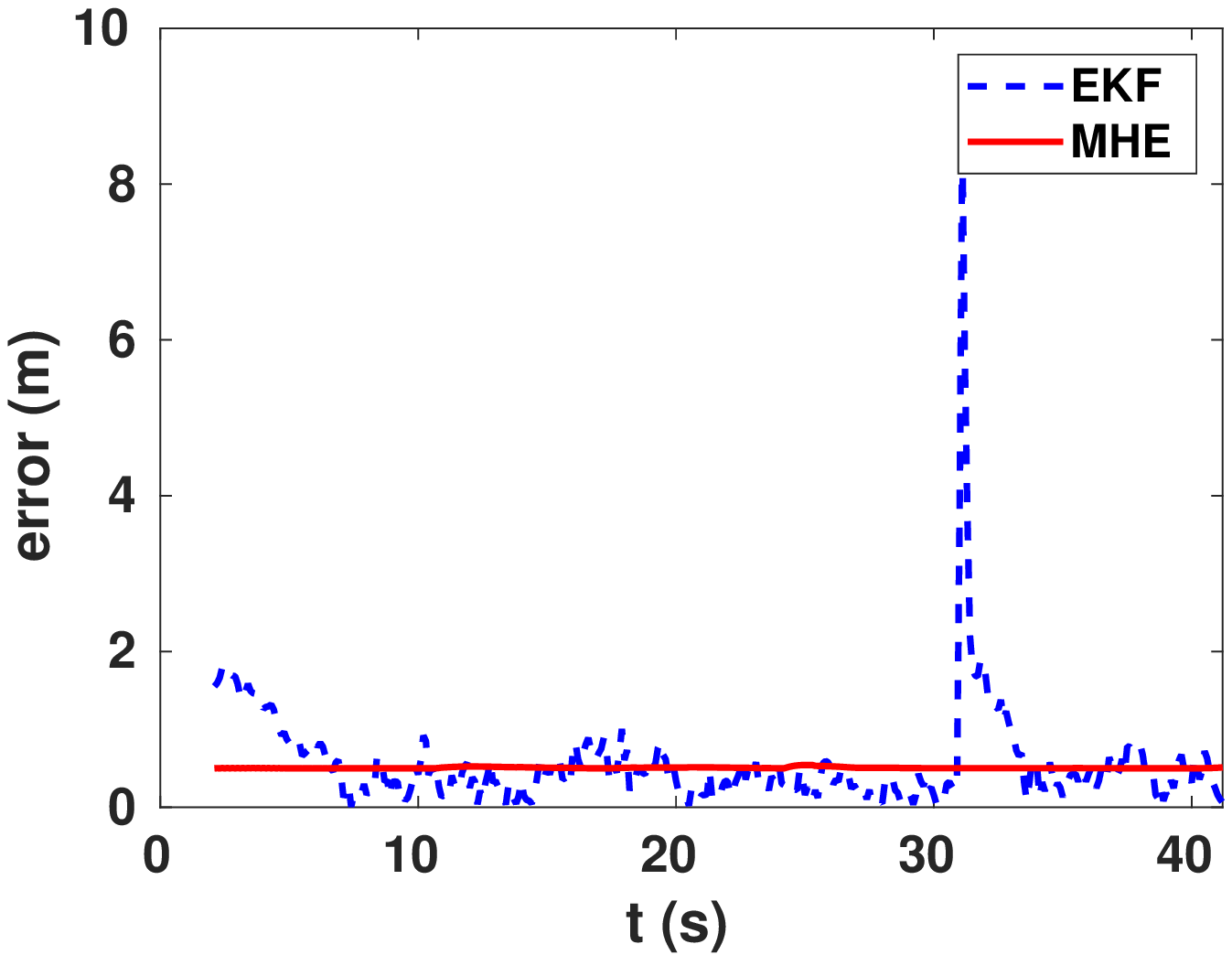}
		\caption{}
		\label{fig:ekfp2}
	\end{subfigure}
	\begin{subfigure}{3.5cm}
		\includegraphics[width=3.5cm]{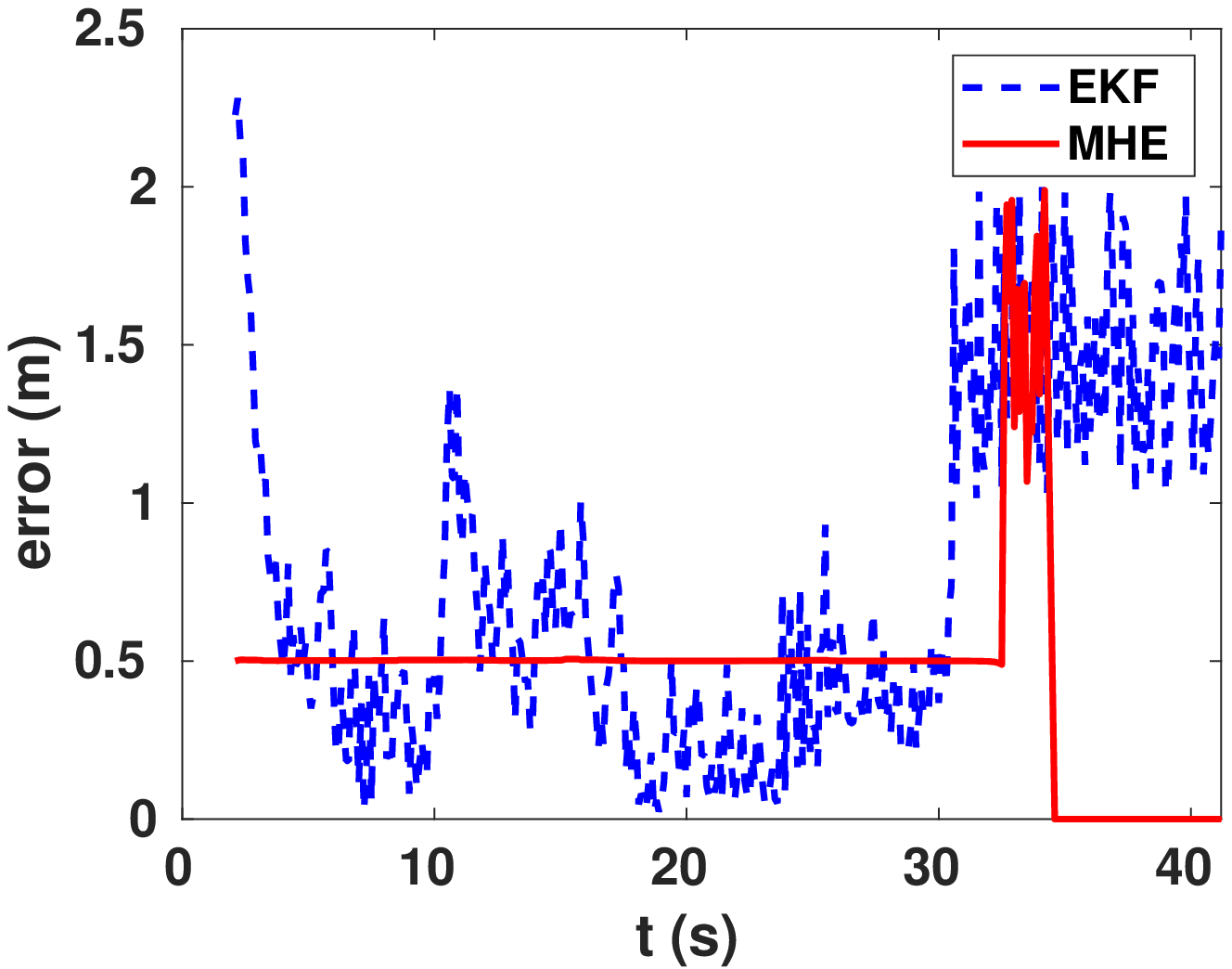}
		\caption{}
		\label{fig:ekfp3}
	\end{subfigure}
	\caption{(a) NMPC solution with EKF based estimation (b) NMPC solution with MHE (c) Error in position of vehicle 1 (d) Error in position of vehicle 2  (e)  Error in position of vehicle 3}
	\label{fig:EKF_MHE}
\end{figure*}


\subsection{10 vehicles}
To test the performance of the scheme for larger systems, a scenario involving ten vehicles in a 500m $\times$ 500m plane with all the vehicles moving at a velocity of 10\,m/s was considered. Multiple simulations were carried out with random configurations of the landmarks, initial positions, and goal points. Two example results are shown in Fig.~\ref{fig:multi10}. It can be seen that the estimator performance was satisfactory since the actual states and the estimated states are well aligned for all the ten vehicles. The average computation time per iteration was 7.25\,s due to the increase in the size of state matrix for 10 vehicles.

\begin{figure}
	\centering 
			\includegraphics[width=4.3cm]{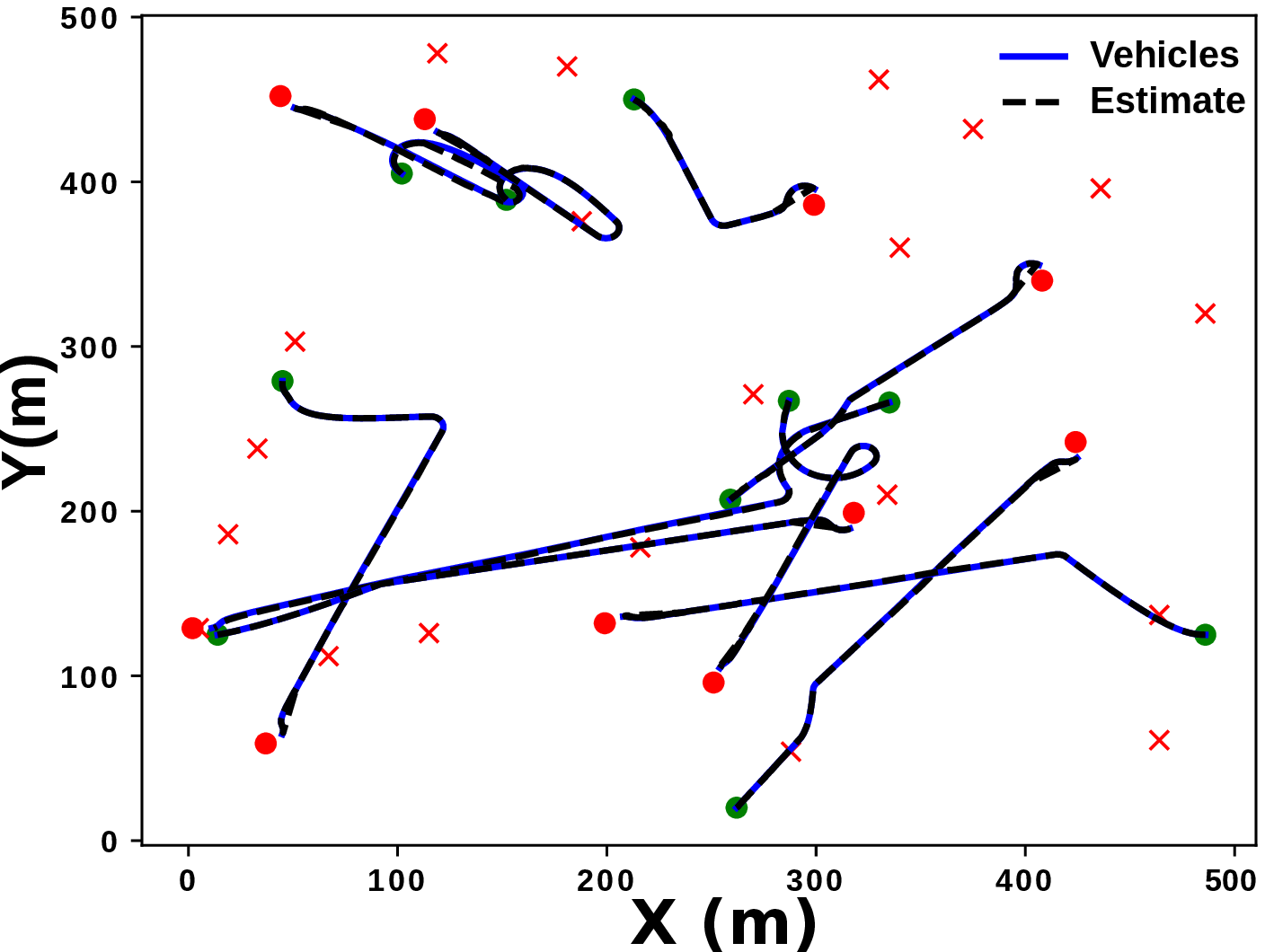}
					\includegraphics[width=4.3cm]{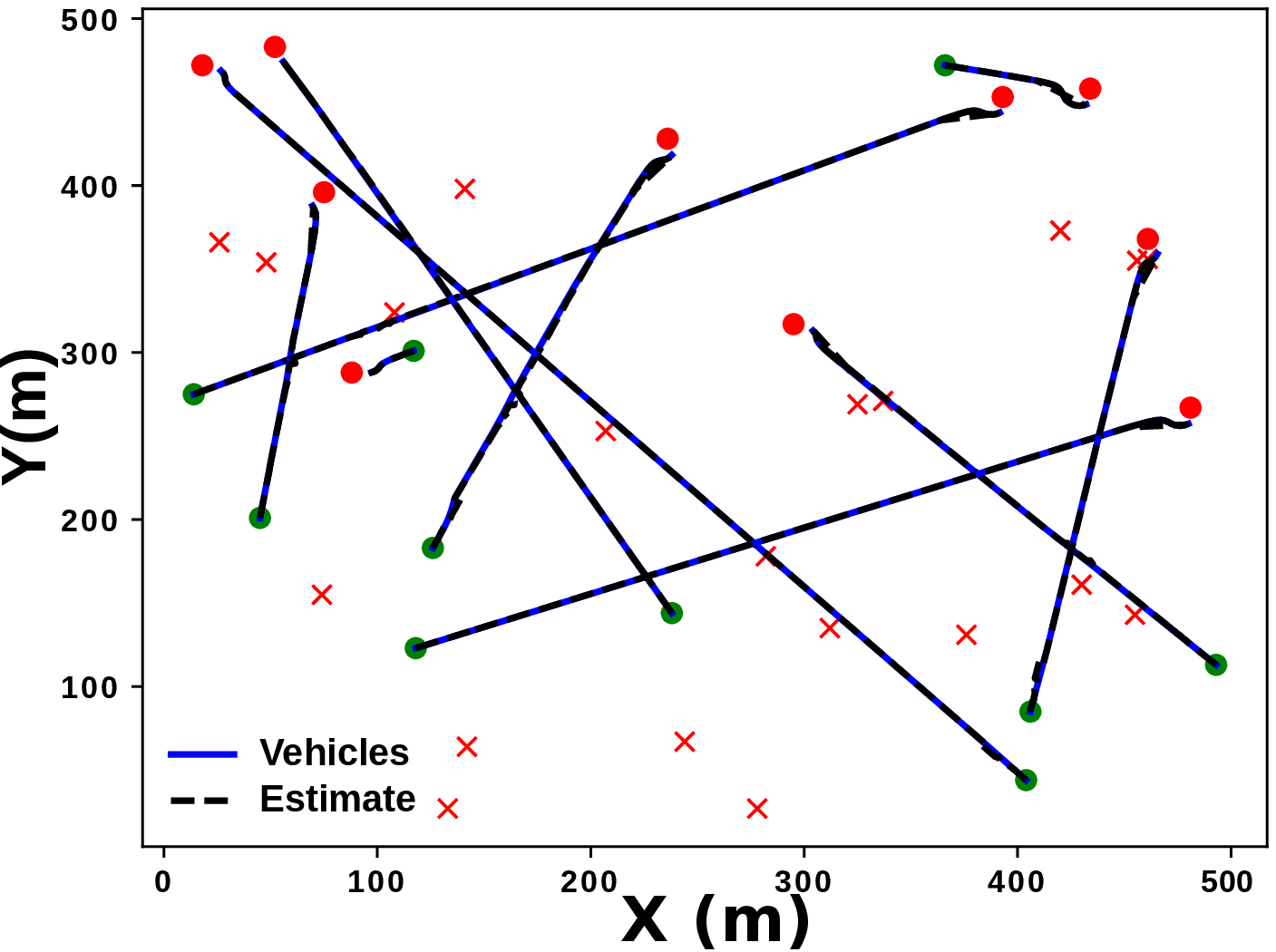}
		\caption{Trajectories of 10 vehicles using NMPC-MHE scheme.}
		\label{fig:multi10}
\end{figure}    
	\section{Conclusions}\label{sec:conclusions}
A nonlinear model predictive control scheme combined with moving horizon estimation was proposed to aid cooperative localization of a group of AAVs in transit. The controller used an approximate analytical expression for calculating the expected covariance of the vehicles through the prediction horizon, which was derived using the insights obtained from analyzing the observability and the path information from the landmark-vehicle graph. The controller determined near optimal paths for the vehicles while satisfying various state and localization constraints.
We analyzed through simulations the role of prediction horizon on the optimality of the vehicle paths and the required computation time.  The proposed moving horizon estimator also outperformed the EKF with lower estimation error values at a small additional computation time. A comparison was performed between cooperative and non-cooperative vehicles to show the significance of cooperation in determining paths under localization constraints. 

The approached proposed in this paper can be extended in several direction. One potential analysis is to determine how many landmarks are sufficient for a given vehicle to reach the destination meeting localization constraints. Another extension can be to include obstacle avoidance which reaching their destination as part of the problem. Experimental validation is another direction to implement the algorithms in real-world vehicle.

	\bibliographystyle{IEEEtran}
	\bibliography{ref}
\end{document}